\documentclass{article}

\usepackage{microtype}
\usepackage{graphicx}
\usepackage{subcaption}
\usepackage{booktabs} 

\usepackage{hyperref}
\hypersetup{colorlinks=true, linkcolor=black, citecolor=black, urlcolor=black}

\usepackage[accepted]{icml2026}

\usepackage{amsmath}
\usepackage{amssymb}
\usepackage{mathtools}
\usepackage{amsthm}
\usepackage{marvosym}

\usepackage[capitalize,noabbrev]{cleveref}
\hypersetup{colorlinks=true, linkcolor=black, citecolor=black, urlcolor=black}

\theoremstyle{plain}
\newtheorem{theorem}{Theorem}[section]

\newtheorem{lemma}[theorem]{Lemma}
\newtheorem{corollary}[theorem]{Corollary}
\theoremstyle{definition}

\theoremstyle{remark}

\usepackage{float}

\usepackage{multirow}
\usepackage{threeparttable}
\usepackage{adjustbox}
\usepackage{placeins}

\usepackage{algorithm}

\usepackage{algpseudocode}

\usepackage[disable,textsize=tiny]{todonotes}

\icmltitlerunning{LERD: Latent Event-Relational Dynamics for Neurodegenerative Classification}

\begin{document}

\twocolumn[
  \icmltitle{LERD: Latent Event-Relational Dynamics for Neurodegenerative Classification}

  \icmlsetsymbol{equal}{*}
  \icmlsetsymbol{cor}{\Letter}
  \begin{icmlauthorlist}
    \icmlauthor{Yicheng Feng}{equal,ku}
    \icmlauthor{Hairong Chen}{equal,bjtu,ntu,casia}
    \icmlauthor{Ziyu Jia}{casia,moe,cor}
    \icmlauthor{Samir Bhatt}{ku,ic,cor}
    \icmlauthor{Hengguan Huang}{ku,ic,cor}
  \end{icmlauthorlist}
  \icmlaffiliation{ku}{Section of Health Data Science \& AI, Department of Public Health, University of Copenhagen, Copenhagen, Denmark}
  \icmlaffiliation{ic}{MRC Centre for Global Infectious Disease Analysis, Department of Infectious Disease Epidemiology, School of Public Health, Faculty of Medicine, Imperial College London, London, United Kingdom}
  \icmlaffiliation{bjtu}{Institute of Information Science, Beijing Jiaotong University, Beijing, China}
  \icmlaffiliation{casia}{Beijing Key Laboratory of Brainnetome and Brain-Computer Interface, Institute of Automation, Chinese Academy of Sciences, Beijing, China}
  \icmlaffiliation{ntu}{College of Computing and Data Science, Nanyang Technological University, Singapore}
  \icmlaffiliation{moe}{Key Laboratory of Social Computing and Cognitive Intelligence (Dalian University of Technology), Ministry of Education}

  \icmlcorrespondingauthor{Hengguan Huang}{hengguan.huang@sund.ku.dk}
  \icmlcorrespondingauthor{Ziyu Jia}{jia.ziyu@outlook.com}
  \icmlcorrespondingauthor{Samir Bhatt}{samir.bhatt@sund.ku.dk}

  \icmlkeywords{Machine Learning, ICML}

  \vskip 0.3in
]



\printAffiliationsAndNotice{\icmlEqualContribution}

\begin{abstract}
 Alzheimer’s disease (AD) alters brain electrophysiology and disrupts multichannel EEG dynamics, making accurate and clinically useful EEG-based diagnosis increasingly important for screening and disease monitoring. However, many existing approaches rely on black-box classifiers and do not explicitly model the latent event timing and cross-channel coordination behind their decisions. To address these limitations, we propose LERD, an end-to-end Bayesian latent event--relational dynamical system that infers latent neural events and their relational structure directly from multichannel EEG without event or interaction annotations. LERD combines a continuous-time event inference module with a stochastic event-generation process to capture flexible temporal patterns, while incorporating an electrophysiology-inspired dynamical prior to guide learning in a principled way. We further provide theoretical analysis that yields a tractable IVP-based KL regularizer and stability guarantees for the inferred relational dynamics. Extensive experiments on synthetic benchmarks and two real-world AD EEG cohorts demonstrate that LERD consistently outperforms strong baselines and yields physiology-aligned rate, timing, and graph summaries that help characterize group-level dynamical differences.
\end{abstract}

\section{Introduction}

Alzheimer’s disease (AD) is a progressive neurodegenerative disorder and a major cause of dementia worldwide, with enormous personal and societal costs. Electroencephalography (EEG) offers a non-invasive, low-cost and widely available window into AD-related brain dysfunction. Decades of work have established robust macroscopic signatures, most notably \emph{oscillatory slowing}, increased delta/theta power and reduced alpha/beta power, alongside alterations in large-scale interactions and synchrony across cortical regions \citep{jeong2004,dauwels2010,babiloni2021}. These findings have motivated a growing interest in using EEG not only for automated AD assessment but also for mechanistic insight into how neural activity and network dynamics are disrupted over the course of the disease.
\begin{figure}[H]
\centering
\includegraphics[width=\columnwidth]{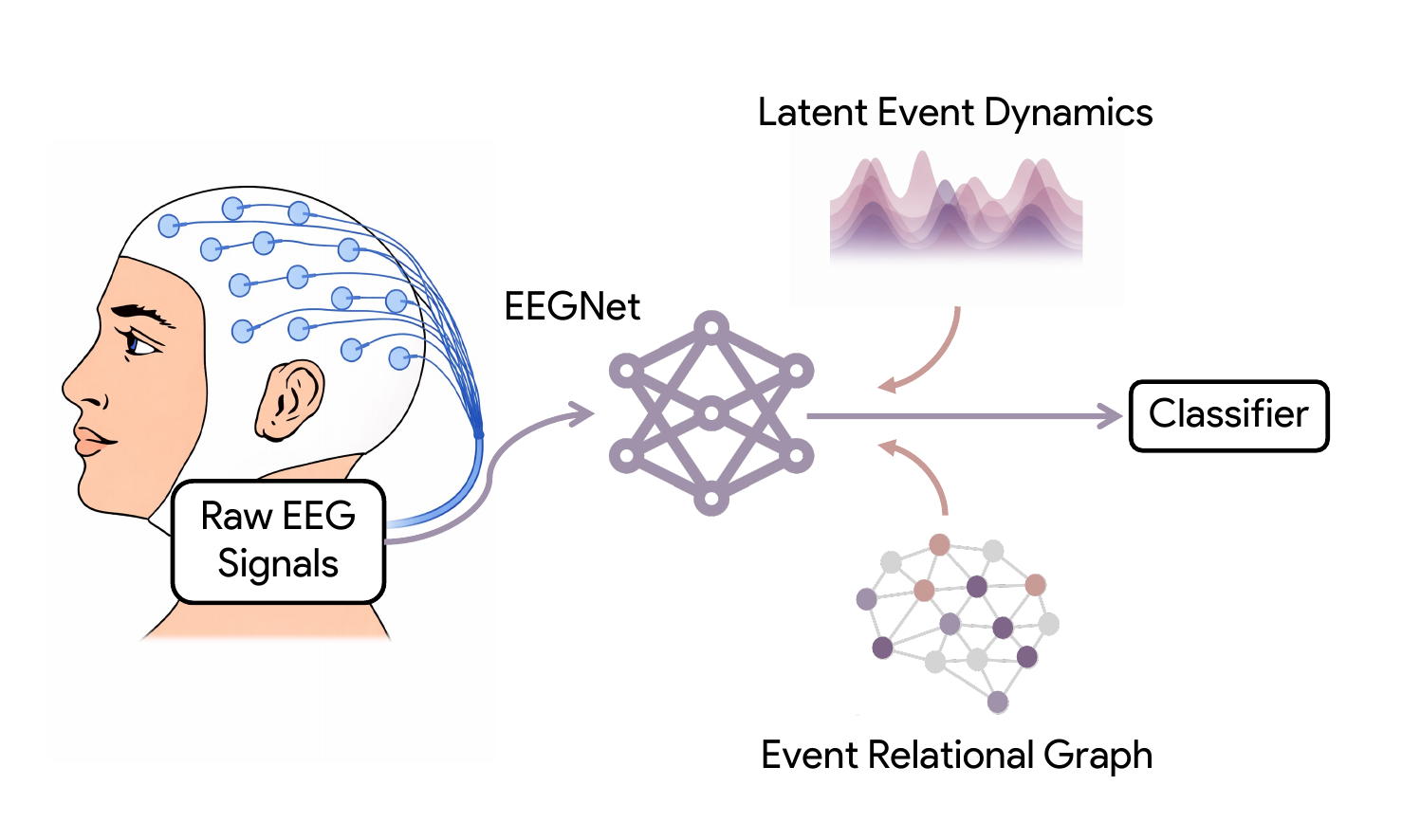}
\caption{Overview of the LERD pipeline}
\label{fig:bayesends_pipeline}
\end{figure}

\footnotetext[1]{Our code is available at:
\url{https://github.com/hairongChenDavid/LERD}.}

Deep learning has substantially advanced EEG-based AD classification and staging. Convolutional and recurrent architectures, as well as graph- and transformer-based models, can ingest multichannel time series or time–frequency representations and achieve strong diagnostic performance \citep{Ieracitano2020NeuralNetworks,Pineda2019IWANN,Vicchietti2023SciRep,Tawhid2025BrainInformatics}. Yet, most existing pipelines are designed as \emph{label predictors} optimized primarily for accuracy rather than latent-event recovery from hand-crafted or learned features. They map rich spatiotemporal signals to labels, but typically do not model how \emph{latent neural events} and \emph{interaction patterns} unfold and interact in time \citep{ehteshamzad2024,acharya2025,wang2024adformer,klepl2024gnnsurvey}. As a result, it remains difficult to relate model predictions back to dynamical structures that neuroscientists and clinicians can compare against established EEG phenomena, such as event-driven activity patterns, timing-derived interaction summaries and their disease-related distortions.

Building such a unifying, electrophysiology-aware dynamical framework is challenging for at least two reasons. First, scalp EEG is a noisy, frequency-dependent \emph{linear mixture} of mesoscopic sources; reconstructing latent population activity from sensor-level data is an ill-posed inverse problem highly sensitive to modeling assumptions and head conductivity estimates \citep{michel2019esi, michel2004esi, ma2026codebrain}. Second, widely used interaction metrics suffer from \emph{interpretational pitfalls}; volume conduction, common input and signal-to-noise differences can all induce spurious or inconsistent connectivity estimates across analysis pipelines unless dynamics and biophysical constraints are handled explicitly \citep{bastos2016pitfalls,mahjoory2017consistency}. Together, these issues make it nontrivial to infer meaningful latent dynamics and directed graphs directly from raw or minimally processed EEG.

These observations raise a set of fundamental questions for EEG-based modeling of AD. \emph{First}, can we design a neural dynamical system that infers latent, event-based activity trajectories directly from multichannel EEG, while respecting basic electrophysiological constraints such as plausible firing rates, refractory behavior and frequency ranges? \emph{Second}, can we recover a directed, event-level interaction graph that captures how events in one channel probabilistically influence future events in others, in a way that is robust to mixing and noise and does not require spike or edge annotations? \emph{Third}, can such a model simultaneously yield latent dynamical structure that supports scientific analysis and competitive performance on practical tasks, such as characterizing AD-related changes in network dynamics and improving EEG-based AD assessment?

We address these questions with \textbf{LERD}, a \emph{Bayesian latent event--relational dynamical system} that infers event-driven latent dynamics and a conditional interaction graph directly from EEG, as shown in Figure~\ref{fig:bayesends_pipeline}. The key novelty is joint recovery of events, stochastic intervals, and event-lag graphs under one variational objective. At its core, LERD represents per-channel activity with an \textbf{Event Posterior Differential Equation (EPDE)}, whose solution provides a continuous-time representation of latent dynamics and expected next-event times. These expectations parameterize a \textbf{Mean–Evolving Lognormal Process (MELP)} that samples inter-event intervals from a log-normal mixture via reparameterized sampling, enabling flexible yet tractable modeling of event statistics. To encode electrophysiological knowledge, we introduce a differentiable \textbf{leaky–integrate–and–fire (dLIF) prior} that imposes leak, refractory and rate constraints as well as plausible frequency ranges on the latent process. Finally, LERD infers a directed \textbf{event–relational graph (ERG)} by mapping cross-channel event lags through a smooth nonlinearity into edge weights, thereby summarizing how events in one channel precede, follow, or align with events in others. The entire model is trained end-to-end under a variational inference framework, for which we derive a tractable initial value problem (IVP)-based representation and finite-horizon surrogate for the event--prior KL under the dLIF rate, and we establish stability guarantees for the inferred ERG with respect to lag noise.

Importantly, our use of electrophysiology-inspired dLIF prior is \emph{phenomenological} rather than a claim that scalp EEG directly reflects single-neuron dynamics. Scalp EEG has high temporal resolution (millisecond scale) but limited spatial specificity; we therefore treat the dLIF component as a structured prior on \emph{population-level} latent rate dynamics (e.g., leak/refractory constraints), used to regularize an otherwise flexible neural dynamical system.

\paragraph{Major Contributions.}
\begin{itemize}
 \item We formulate annotation-free latent event--relation discovery and propose \textbf{LERD}, a unified Bayesian neural dynamical system that infers latent event processes and event–relational graph dynamics directly from multichannel EEG, without requiring spike or edge annotations.
 \item We incorporate an electrophysiology-informed \textbf{dLIF prior} into training, providing biophysically meaningful constraints on rates, refractoriness and frequency content of the inferred latent dynamics.
 \item We develop theory establishing a computationally tractable IVP-based event-prior KL representation and finite-horizon surrogate for the variational learning objective and a stability bound for LERD’s inferred event–relational graph dynamics.
 \item We provide empirical evidence that LERD (i) accurately recovers latent event and graph dynamics that shed light on AD-related alterations in neural activity and interactions, and (ii) achieves superior performance over strong baselines on both synthetic benchmarks and real AD EEG datasets.
\end{itemize}

\section{Related Work}

A related line of research can be broadly viewed as \emph{mechanistic Bayesian reasoning}: methods that move beyond black-box prediction by discovering or explicitly modeling latent interactions and dynamics for understanding biological mechanisms from biomedical and biological data. Dynamic causal modeling is a canonical Bayesian example in neuroscience, using generative inversion to infer neurobiologically interpretable interactions from neuroimaging and electrophysiological measurements \citep{friston2003dcm,kiebel2009dcm_eegmeg}. Recent Bayesian deep models extend this spirit to latent microbiome interaction graphs, continuous-time adaptation under uncertainty, stochastic event-boundary dynamics, latent relational graph processes, and recurrent point-process modeling \citep{song2026ilora,huang2022ecbnn,huang2021strode,huang2020dgrp,huang2019rppu}. LERD follows this direction in AD EEG but targets a distinct annotation-free regime: both event times and event-lag relations are hidden, so the model must jointly infer population-level latent events and timing-derived interactions from multichannel signals. We next situate this contribution relative to two closest literatures: supervised EEG-based AD diagnosis and EEG dynamics/latent structure modeling.

\subsection{EEG-based AD diagnosis with supervised pipelines}

A large body of work frames EEG-based analysis of Alzheimer's disease (AD) as a supervised classification problem, where hand-crafted spectral or connectivity features (e.g., band power, topographies, and simple graph summaries) are extracted from multichannel recordings and then fed into conventional classifiers \citep{Ieracitano2020NeuralNetworks}. Recent studies further benchmark alternative feature--classifier combinations and examine protocol choices such as task versus resting-state recordings \citep{Vicchietti2023SciRep,Kim2024SciRep}. While these pipelines can achieve strong diagnostic accuracy, they typically summarize EEG into static descriptors and do not explicitly model the latent, event-driven dynamics and directed interactions that generate the signals. More recent studies continue to refine supervised AD--EEG pipelines with stronger models and evaluation protocols \citep{Vicchietti2023SciRep,Kim2024SciRep,Tawhid2025BrainInformatics}. LERD is complementary: it preserves label prediction but makes latent event timing and event-lag relations explicit model outputs.

\subsection{EEG dynamics and latent structure modeling}

A complementary line of work seeks to model EEG dynamics more directly from a dynamical-systems perspective, emphasizing rapidly switching, whole-brain patterns and their temporal organization. EEG microstate analysis characterizes ongoing activity as sequences of quasi-stable scalp topographies (typically \(\sim\)60--120\,ms) and provides an interpretable window into transient large-scale network dynamics \citep{michelkoenig2018microstates}. Complementarily, Hidden Markov Modelling has been used to infer fast transient EEG network states and their transition structure \citep{hunyadi2019eegfmrhmm}.

Despite this progress, many dynamical approaches still operate on continuous-valued latent trajectories and do not explicitly recover directed, event-level interactions that explain how activity in one region influences future dynamics in others. Generative biophysical models such as dynamic causal modeling (DCM) provide a principled route to infer effective connectivity from EEG/MEG, but they typically require specifying an explicit neural mass model and a priori network structure \citep{kiebel2009dcm_eegmeg}. As a result, bridging transient EEG state dynamics with directed, event-driven interaction structure remains an open challenge.

Classical point-process state-space or GLM models provide powerful tools when event sequences are observed, for example in neural spike-train analysis \citep{smith2003ssm,truccolo2005ppglm}. They are not direct drop-in baselines in our setting because the event times are latent and unannotated; using them would require an external event detector or additional supervision, which would change the problem definition. This is precisely the regime targeted by LERD.

\section{Problem Formulation}
We study unsupervised latent–event and relation discovery in multichannel time series with sequence-level labels. Given a dataset $\mathcal{D}=\{(X^{(n)},Y^{(n)})\}_{n=1}^N$ with $X^{(n)}=\{x^{(n)}_c\}_{c=1}^{C}$ (e.g., multichannel EEG) and labels $Y^{(n)}\in\mathcal{Y}$ (e.g., AD vs. control), and \emph{no} supervision on per-channel events or inter-channel relations, the objective is to infer (i) channel-wise latent event dynamics and (ii) a (possibly time-varying) relational/graphical structure among channels. For each channel $c\in\{1,\dots,C\}$ and sequence $n$, let
$T_c^{(n)}=\{t^{(n)}_{c,k}\}_{k=1}^{K_c^{(n)}}$ denote the (unknown) latent event times, where
$t^{(n)}_{c,k}$ is the $k$-th event time on channel $c$ in sequence $n$, and $K_c^{(n)}$ is the (latent) number of events on that channel. Let
$\mathbf{p}^{(n)}(t)=\{p^{(n)}_c(t)\}_{c=1}^C$ denote the corresponding posterior event-time distributions. The relational structure is represented as a graph process $G^{(n)}(t)$ with adjacency $A^{(n)}(t)\in\mathbb{R}^{C\times C}$. We aim to recover $\{\mathbf{p}^{(n)}(t)\}_{n=1}^N$ and the conditional graph dynamics $P\big(G^{(n)}(\cdot)\mid\mathbf{p}^{(n)}(\cdot)\big)$ from $\{X^{(n)}\}_{n=1}^N$, while using $\big(\mathbf{p}^{(n)}(\cdot),G^{(n)}(\cdot)\big)$ as inputs to a downstream predictor for $Y^{(n)}$; importantly, $Y^{(n)}$ does not supervise the latent events or relations directly.

The interaction strength between channels is modeled as a function of the temporal \emph{co-occurrence and ordering} of inferred events. Here an ``event'' denotes a population-level latent transition time inferred from scalp EEG, not a single-neuron spike. Accordingly, the ERG should be interpreted as a sensor-space, timing-derived latent interaction graph rather than source-level causal or DCM-style effective connectivity. This assumption is inspired by neurobiological mechanisms of \emph{spike-timing-dependent plasticity} (STDP), where near-coincident pre- and post-synaptic spikes modulate synaptic efficacy, while we do not claim that EEG-level edges identify synaptic plasticity or anatomical connectivity ~\citep{BiPoo1998,Feldman2012STDP}.

\section{Bayesian Neural Dynamical System}
\subsection{Overview}
We introduce LERD, a Bayesian neural dynamical system for multichannel sequences that
represents each channel with a latent \emph{event} process and couples channels through a conditional
\emph{event–relational graph} (ERG) driven by the timing and ordering of inferred events. This design is motivated by settings where the clinically relevant signal resides in \emph{when} events
occur and \emph{how} they align across channels. In EEG for Alzheimer’s disease, for instance,
oscillatory slowing and disrupted coordination suggest that event timing and cross-channel alignment
are predictive, while neurobiological plasticity links near-coincident spikes to stronger coupling.

At a high level, LERD consists of three interacting components. First, an \emph{event posterior
differential equation} (EPDE) summarizes per-channel event dynamics by producing posterior
distributions over latent event times $\{T_c^{(n)}\}$ given the observed multichannel sequence
$X^{(n)}$. Second, a \emph{mean–evolving lognormal mechanism} (MELP) uses EPDE outputs as
mean parameters to generate stochastic inter-event timing between successive latent event times
$t^{(n)}_{c,k}$, ensuring positive and flexible (potentially multimodal) timing statistics. Third, an
\emph{event–relational graph} $G^{(n)}(t)$ is inferred from event co-occurrence and cross-channel
lags derived from $T^{(n)}$, via an STDP-shaped mapping that encodes how the timing of events on
one channel is summarized as directed timing-dependent edge weights.

For each labeled sequence $(X^{(n)},Y^{(n)})$, the triple $(X^{(n)},T^{(n)},G^{(n)})$ is passed to a
decoder $p_\theta\!\big(Y^{(n)} \mid X^{(n)},T^{(n)},G^{(n)}\big)$ for downstream prediction
(e.g., AD vs. control). Training is end-to-end via variational learning that jointly optimizes the
EPDE and MELP while learning ERG dynamics under weak regularization; further details are given
in the learning subsection.

\subsection{Learning}
We train LERD end-to-end through variational inference by \emph{minimizing} a negative-ELBO-style objective. For labeled data $\{(X^{(n)},Y^{(n)})\}_{n=1}^N$, let $T^{(n)}$ collect all
latent event times $\{T_c^{(n)}\}_{c=1}^C$ and let $\tau^{(n)}$ collect the corresponding inter-event
intervals $\{\tau_{c,k}^{(n)}\}$. The EPDE induces an approximate posterior
$q_\phi\big(T^{(n)} \mid X^{(n)}\big)$, the MELP defines $q_\phi\big(\tau^{(n)} \mid X^{(n)}\big)$, and
the decoder is $p_\theta\!\big(Y^{(n)}\mid X^{(n)},T^{(n)},G^{(n)}\big)$, where $G^{(n)}$ is the ERG
associated with $X^{(n)}$ and $\eta$ denotes ERG parameters.

Concretely, we minimize
\begin{align}
\mathcal{J}(\theta,\phi,\eta)
&=\sum_{n=1}^N \Big(
-\,\mathbb{E}_{q_\phi}[\ell_n]
+\mathrm{KL}_T^{(n)}+\mathrm{KL}_{\tau}^{(n)} \nonumber\\[-0.5mm]
&\quad +\beta\,\mathcal{R}_{\mathrm{ERG}}^{(n)}
+\lambda_{\mathrm{LIF}}\,\mathcal{R}_{\mathrm{LIF}}^{(n)}\Big),
\label{eq:elbo}
\end{align}
where $\ell_n=\log p_\theta(Y^{(n)}\mid X^{(n)},T^{(n)},G^{(n)})$ and
$
\mathrm{KL}_T^{(n)} := \mathrm{KL}\!\big(q_\phi(T^{(n)}\!\mid X^{(n)}) \,\|\, p_{\mathrm{dLIF}}(T)\big)
$
compares the EPDE-induced path law to the electrophysiology-informed event prior
$p_{\mathrm{dLIF}}(T)$, and
$
\mathrm{KL}_{\tau}^{(n)} := \mathrm{KL}\!\big(q_\phi(\tau^{(n)}\!\mid X^{(n)}) \,\|\, p_0(\tau)\big)
$
penalizes deviation from a lognormal(-mixture) prior over inter-event intervals. The term
$\mathcal{R}_{\mathrm{LIF}}^{(n)}$ softly enforces leaky–integrate–and–fire consistency on differentiable
rate proxies read out from the EPDE state, with weight $\lambda_{\mathrm{LIF}}\!\ge\!0$. The term
$\mathcal{R}_{\mathrm{ERG}}^{(n)}$ is a weak, observable-based regularizer that nudges ERG edges toward
experimental statistics computed from $X^{(n)}$ (e.g., correlation-based summaries), with strength
$\beta\!\ge\!0$.

\noindent\textbf{Challenges.} Three technical issues arise in optimizing~\eqref{eq:elbo}. First,
$\mathrm{KL}_T^{(n)}$ involves \emph{path measures} induced by a differential equation and is
intractable in closed form (it integrates over an infinite-dimensional trajectory); we therefore replace
it with a tractable integral–rate surrogate that depends only on the dLIF rate $r(t)$, with a formal
IVP representation in Theorem~\ref{thm:ivp-kl-dlif}. Second, enforcing a LIF prior directly is difficult because
the spike function in LIF is \emph{non-differentiable}, which prevents straightforward use in
gradient-based training; instead we introduce differentiable rate proxies and constrain them to follow
dLIF laws via $\mathcal{R}_{\mathrm{LIF}}^{(n)}$. Third, ERG learning lacks ground-truth edges; to avoid
over-constraining the graph, we only use the weak regularizer $\mathcal{R}_{\mathrm{ERG}}^{(n)}$ to bias edge
strengths toward experimental observables, leaving the fine-grained graph structure to be driven by
event lags inferred from the EPDE–MELP posterior.

Setting $\beta=0$ removes this observable Fisher--$z$ anchoring, while setting $\lambda_{\mathrm{LIF}}=0$ removes the dLIF prior. Table~\ref{tab:ablation} reports no-prior, dLIF-only, ERG-only, and dual-prior variants, so LERD's gains are not obtained by forcing the ERG to reproduce Pearson correlations.

\subsection{Prior: Electrophysiology–informed dLIF prior}
We place a biophysical prior on latent event timing by instantiating each channel’s latent events
$T_c^{(n)} = \{t^{(n)}_{c,k}\}$ as a renewal process whose hazard is derived from a differentiable
leaky–integrate–and–fire (dLIF) abstraction \citep{burkitt2006lif}. For channel $c$, the (rescaled)
membrane potential evolves as
\begin{equation}
\frac{d}{dt} u_c(t) \;=\; b_c(t) - u_c(t), \qquad b_c(t) > 1,
\end{equation}
where $b_c(t)$ is an effective (learned) input drive. Given this membrane dynamics, the implied instantaneous \emph{firing rate} is
\begin{equation}
r_c(t)
\;=\;
\Big[-\log\!\big(1 - 1/b_c(t)\big)\Big]^{-1}.
\end{equation}
This rate induces a dLIF inter-event time density
\begin{equation}
p_{\mathrm{dLIF},c}(t)
\;=\;
r_c(t)\,\exp\!\Big(-\!\int_0^{t} r_c(s)\,ds\Big),
\end{equation}
and the resulting dLIF prior for channel $c$ is the renewal law
$p_{\mathrm{dLIF}}(T_c^{(n)})$ with hazard $r_c(t)$. We parameterize $b_c(t)$ by a bounded neural mapping from learned embeddings, for example
\begin{equation}
b_c(t) \;=\; 1 + \mathrm{softplus}\!\big(g_\xi(z_c^{(n)}(t))\big),
\end{equation}
where $z_c^{(n)}(t)$ denotes features derived from $X^{(n)}$.
This construction guarantees $b_c(t) > 1$ and thus $r_c(t) > 0$. Absolute and refractory effects
are incorporated through a smooth gating factor $\alpha_c^{(n)}(t) \in (0,1]$ constructed from recent
events in $T_c^{(n)}$, using an effective rate $\widetilde r_c(t) = \alpha_c^{(n)}(t)\,r_c(t)$ to suppress implausible near–back–to–back spikes.

Because the hard spike nonlinearity is non–differentiable, we regularize \emph{rates} rather than
spikes. Concretely, the learning objective includes a dLIF consistency term
\begin{equation}
\mathcal{R}_{\mathrm{LIF}}
\;=\;
\sum_{c} \int_{0}^{S} \big(\widehat r_c^{(n)}(t) - r_c(t)\big)^2\,dt,
\end{equation}
where $\widehat r_c^{(n)}(t)$ is a differentiable rate proxy read from the EPDE state for sequence
$n$ over a time horizon $[0,S]$. This encourages the learned rates to follow dLIF membrane
dynamics without invoking non–differentiable spike functions \citep{neftci2019sg}. The variational
KL between the EPDE–induced path law $q_\phi\big(T_c^{(n)} \mid X^{(n)}\big)$ and
$p_{\mathrm{dLIF}}(T_c^{(n)})$ is intractable in general; 
we therefore use the following tractable IVP representation, which depends on $r_c(t)$ and yields a stable finite-horizon surrogate for training while preserving the biophysical semantics of the prior.

\begin{theorem}[IVP representation of the event--prior KL under dLIF rates]\label{thm:ivp-kl-dlif}
Let $q(t)$ be a strictly positive density on $[0,S]$ ($0<S\le\infty$), and assume the KL integrand below is integrable. Let $r:[0,S]\to[a,b]\subset(0,\infty)$ be measurable and define $R(t)=\int_0^t r(u)\,du$. On a finite EEG window, use the normalized dLIF first-event density
\begin{equation}
\begin{aligned}
 p_r^S(t)&=\frac{r(t)\exp[-R(t)]}{Z_S},\\
 Z_S&=\int_0^S r(v)\exp[-R(v)]\,dv=1-\exp[-R(S)],
\end{aligned}
\label{eq:dlif-density-normalized}
\end{equation}
with $Z_\infty=1$. Let $h_S(t)=q(t)\log(q(t)/p_r^S(t))$, $m=-e^{-t}$ and $M(m)=-\log(-m)$. Define
\begin{equation}
\begin{aligned}
 g(m)&=-\frac{q(M(m))}{m}\log\!\frac{q(M(m))}{p_r^S(M(m))},\\
 G'(m)&=g(m),\qquad G(-1)=0 .
\end{aligned}
\label{eq:thm-g}
\end{equation}
For any finite horizon $S<\infty$,
\begin{equation}
K_S:=\mathrm{KL}(q\|p_r^S)=\int_0^S h_S(t)\,dt=G(-e^{-S}).
\label{eq:thm-limit}
\end{equation}
For $S=\infty$,
\begin{equation}
\begin{aligned}
K_\infty&:=\mathrm{KL}(q\|p_r^\infty)=\lim_{\varepsilon\downarrow0}G(-\varepsilon),\\
|K_\infty-G(-\varepsilon)|&\le\int_{-\log\varepsilon}^{\infty}|h_\infty(t)|\,dt\to0 .
\end{aligned}
\label{eq:thm-tail}
\end{equation}
\end{theorem}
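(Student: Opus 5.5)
The plan is to treat the statement as a change of variables $m=-e^{-t}$ applied to the KL integral, combined with the fundamental theorem of calculus for $G$. First I will check that everything is well defined. Since $r$ takes values in $[a,b]\subset(0,\infty)$, $R$ is absolutely continuous, nondecreasing and satisfies $at\le R(t)\le bt$. Hence $Z_S=\int_0^S r e^{-R}\,dv=[-e^{-R(v)}]_0^S=1-e^{-R(S)}$. This computation holds because $\frac{d}{dv}e^{-R(v)}=-r(v)e^{-R(v)}$ almost everywhere and $e^{-R}$ is absolutely continuous. It gives $Z_S\in(0,1]$, and $Z_\infty=1$ since $R(S)\ge aS\to\infty$. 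Thus $p_r^S$ is a strictly positive probability density on $[0,S]$. Together with $q>0$, this makes $h_S$ well defined, and by assumption $h_S\in L^1$.

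\textbf{Substitution step.} The map $t\mapsto m=-e^{-t}$ is a $C^1$ increasing bijection from $[0,S)$ onto $[-1,-e^{-S})$. Its inverse is $M(m)=-\log(-m)$, and $dt=dm/(-m)$. Substituting gives
\[
g(m)=h_S(M(m))\,M'(m)=h_S(M(m))\cdot\frac{1}{-m},
\]
which matches the stated $g$. The change-of-variables formula for Lebesgue integrals under a $C^1$ diffeomorphism then shows that $g\in L^1([-1,-e^{-S}])$ exactly when $h_S\in L^1([0,S])$, and that $\int_{-1}^{-e^{-S}}g(m)\,dm=\int_0^S h_S(t)\,dt$.

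\textbf{Interpreting the IVP.} Here is the main subtlety. The integrand $g$ is only measurable and integrable, not continuous, so "$G'=g$, $G(-1)=0$" should be read in the Carathéodory sense. I will define $G(m)=\int_{-1}^m g(s)\,ds$. This $G$ is the unique absolutely continuous function with $G(-1)=0$ and $G'=g$ almost everywhere, with uniqueness following because the derivative of the difference vanishes almost everywhere for absolutely continuous functions. If $q$ is continuous, $g$ is continuous away from the null set where $r$ may jump, so the classical reading holds off that set. Then for finite $S$, $G(-e^{-S})=\int_0^S h_S\,dt=\mathrm{KL}(q\|p_r^S)$, which is \eqref{eq:thm-limit}.

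\textbf{Infinite horizon.} For $S=\infty$, apply the same substitution on $[0,-\log\varepsilon]$ with $p_r^\infty$. This gives $G(-\varepsilon)=\int_0^{-\log\varepsilon}h_\infty\,dt$, hence
\[
|K_\infty-G(-\varepsilon)|=\Big|\int_{-\log\varepsilon}^\infty h_\infty\,dt\Big|\le\int_{-\log\varepsilon}^\infty|h_\infty|\,dt.
\]
Because $h_\infty\in L^1$, this tail tends to $0$ as $\varepsilon\downarrow0$ by dominated convergence, or equivalently by absolute continuity of the integral. That yields both the limit and the bound in \eqref{eq:thm-tail}.

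The only step needing care is this integrability bookkeeping. The assumption is on $h$, not on $q\log q$ and $q\log p$ separately, so I will never split the logarithm and will work with $h$ as a single integrand throughout. Everything else is a direct calculation.
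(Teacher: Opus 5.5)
Your proposal is correct and follows the paper's proof essentially step for step. You evaluate $Z_S$ by the fundamental theorem of calculus, substitute $m=-e^{-t}$ to identify $\int_0^S h_S\,dt$ with $\int_{-1}^{-e^{-S}}g(m)\,dm=G(-e^{-S})$, and for $S=\infty$ apply the finite identity on $[0,-\log\varepsilon]$ and bound the remainder by $\int_{-\log\varepsilon}^{\infty}|h_\infty(t)|\,dt$. Your Carath\'eodory reading $G(m)=\int_{-1}^{m}g(s)\,ds$ and the absolute-continuity justifications are a useful precision that the paper leaves implicit. The only slip is your aside that $g$ is continuous off a null set: this fails for a general measurable $r$ (e.g.\ $r=a+(b-a)\mathbf{1}_{\mathbb{Q}}$ is discontinuous everywhere), but nothing in your argument depends on it.
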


The proofs are provided in the Appendix. In training, EEG windows have finite horizon, so the event-prior term is evaluated through the computable IVP integral in Eq.~(\ref{eq:thm-limit}) using an ODE solver for Eq.~(\ref{eq:thm-g}). This term is used as a tractable prior-matching regularizer rather than an exact path-measure KL; its practical contribution is assessed by the prior ablations in Table~\ref{tab:ablation}.

We then analyze how entry–wise perturbations of lags affect the learned ERG when the edge map is exponential.
For channels $i\neq j$ and time $t$, let the noise–free lag be $\Delta t_{ij}(t;T)$ and the perturbed lag be
$\widetilde{\Delta t}_{ij}(t;T)=\Delta t_{ij}(t;T)+\xi_{ij}(t;T)$.
Define the edge map $\phi_\alpha(x)=\exp(-\alpha|x|)\in[0,1]$ with slope parameter $\alpha>0$ and the
(noise–free and perturbed) edges
\[
\begin{aligned}
e_{ij}(t;T)&=\phi_\alpha(\Delta t_{ij}(t;T)),\\
\widetilde e_{ij}(t;T)&=\phi_\alpha(\widetilde{\Delta t}_{ij}(t;T)).
\end{aligned}
\]
The decoder uses the Monte–Carlo, time–averaged adjacencies
\[
\begin{aligned}
\bar A_{ij}&=\frac{1}{MS}\sum_{m=1}^{M}\sum_{s=1}^{S} e_{ij}(t_m;T^{(s)}),\\
\widetilde{\bar A}_{ij}&=\frac{1}{MS}\sum_{m=1}^{M}\sum_{s=1}^{S} \widetilde e_{ij}(t_m;T^{(s)}).
\end{aligned}
\]

Proofs of the IVP representation and ERG stability results are provided in Appendix~\ref{B_}.

\subsection{Posterior: Event Posterior Differential Equation (EPDE)}
For each sequence $(X^{(n)},Y^{(n)})$ and channel $c$, let
$q_c^{(n)}\!\big(t \mid x_c^{(n)}\big)$ denote the density of the next event time given the
observed channel signal $x_c^{(n)}$. If $\tilde t_{c,k-1}^{(n)}$ denotes the previous (predicted)
event time on that channel, the expected next event time is
\begin{equation}
\tilde t_{c,k}^{(n)}
\;=\;
\int_{\tilde t_{c,k-1}^{(n)}}^{\infty} t\, q_c^{(n)}\!\big(t \mid x_c^{(n)}\big)\,dt.
\label{eq:epde-exp}
\end{equation}

To express this update via an initial value problem (IVP), we introduce an auxiliary function
$\Phi_c^{(n)}(t)$ whose derivative accumulates the contribution of $q_c^{(n)}$:
\begin{equation}
\big(\Phi_c^{(n)}\big)'(t)
\;=\;
-\,t\,q_c^{(n)}\!\big(t \mid x_c^{(n)}\big).
\label{eq:epde-phi-derivative}
\end{equation}
Its initial value encodes the full expectation under $q_c^{(n)}$:
\begin{equation}
\Phi_c^{(n)}(0)
\;=\;
\int_{0}^{\infty} t\,q_c^{(n)}\!\big(t \mid x_c^{(n)}\big)\,dt.
\label{eq:epde-phi-initial}
\end{equation}
With this definition, the expected next event time in \eqref{eq:epde-exp} can be written as
the IVP solution evaluated at the previous event time:
\begin{equation}
\tilde t_{c,k}^{(n)}
\;=\;
\Phi_c^{(n)}\!\big(\tilde t_{c,k-1}^{(n)}\big).
\label{eq:epde-phi-map}
\end{equation}

Directly solving \eqref{eq:epde-phi-derivative}–\ref{eq:epde-phi-map} and computing
$\Phi_c^{(n)}(0)$ is intractable, so we approximate this mapping with a differentiable neural
surrogate that updates the predicted next event time:
\begin{equation}
\tilde t_{c,k}^{(n)}
\;=\;
f_{\theta_\Phi}\!\big(\tilde t_{c,k-1}^{(n)},\,x_c^{(n)}\big),
\label{eq:epde-nn}
\end{equation}
implemented to ensure $\tilde t_{c,k}^{(n)} > \tilde t_{c,k-1}^{(n)}$, so that latent events remain
strictly ordered in time.

Consequently, differentiating the ideal $\Phi_c^{(n)}(t)$ with respect to $t$ yields an
\emph{event-time posterior} of the form
\begin{equation}
q_c^{(n)}\!\big(t \mid x_c^{(n)}\big)
\;=\;
-\,\frac{\big(\Phi_c^{(n)}\big)'(t)}{t},
\label{eq:epde-posterior}
\end{equation}
which we approximate with the EPDE parameterization. Across channels, the family
$\{q_c^{(n)}(t \mid x_c^{(n)})\}_{c=1}^C$ provides a parametric approximation to the event-time
posteriors $\{p_c^{(n)}(t)\}_{c=1}^C$ introduced in the problem formulation, and jointly defines
the EPDE-induced posterior $q_\phi\!\big(T^{(n)} \mid X^{(n)}\big)$ over latent event times
$T^{(n)} = \{T_c^{(n)}\}_{c=1}^C$ used in the variational objective. The predicted next-event
times $\tilde t_{c,k}^{(n)}$ then serve as mean parameters for the mean–evolving lognormal
mechanism (MELP) that models stochastic variability in event timing, as detailed in the next
subsection.

\subsection{Sampling: Mean–Evolving Lognormal Process (MELP)}
For each sequence $X^{(n)}$ and channel $c$, given the previous event time $\tilde t_{c,i-1}^{(n)}$,
the EPDE produces a $K$-dimensional vector of \emph{candidate mean intervals}
$\tilde{\boldsymbol{\tau}}_c^{(n)} = \big(\tilde\tau_{c,1}^{(n)},\ldots,\tilde\tau_{c,K}^{(n)}\big)\in\mathbb{R}_{+}^{K}$,
together with mixture weights
$\mathbf{w}_c^{(n)} = \big(w_{c,1}^{(n)},\ldots,w_{c,K}^{(n)}\big)\in\Delta^{K-1}$ and scales
$\mathbf{s}_c^{(n)} = \big(s_{c,1}^{(n)},\ldots,s_{c,K}^{(n)}\big)\in\mathbb{R}_{+}^{K}$.
MELP draws the inter–event interval $\tau_{c,i}^{(n)}$ from a lognormal mixture:
\begin{equation}
\begin{aligned}
p\!\big(\tau_{c,i}^{(n)} \mid \cdot\big)
&=\sum_{j=1}^{K} w_{c,j}^{(n)}\,
\text{LogN}\!\big(\tau_{c,i}^{(n)};\,\mu_{c,j}^{(n)},\,(s_{c,j}^{(n)})^2\big),
\end{aligned}
\label{eq:melp-mixture}
\end{equation}
where $\text{LogN}(\cdot;\mu,s^2)$ denotes a lognormal density with log-mean $\mu$ and log-variance
$s^2$.

For each component $j$, we choose $\mu_{c,j}^{(n)}$ as a function of $\tilde\tau_{c,j}^{(n)}$ and
$s_{c,j}^{(n)}$ so that the \emph{mean} of the corresponding lognormal distribution matches the
EPDE-predicted interval $\tilde\tau_{c,j}^{(n)}$. This ties the mixture components’ average
inter-event times directly to the EPDE outputs, while the scales $s_{c,j}^{(n)}$ control uncertainty
around these means.

Sampling from MELP is reparameterized to keep gradients pathwise:
\begin{align}
k &\sim \mathrm{Cat}\!\big(\mathbf{w}_c^{(n)}\big), 
\qquad \varepsilon \sim \mathcal{N}(0,1), \label{eq:melp-cat} \\[-1mm]
\tau_{c,i}^{(n)} &= \exp\!\big(\mu_{c,k}^{(n)} + s_{c,k}^{(n)}\,\varepsilon\big),
\qquad
t_{c,i}^{(n)} = t_{c,i-1}^{(n)} + \tau_{c,i}^{(n)}.
\label{eq:melp-sample}
\end{align}
During training we use a differentiable variant of \eqref{eq:melp-cat} (e.g., Gumbel–Softmax) and
take hard samples at test time. MELP guarantees positive inter-event intervals, captures multimodal
timing statistics, and yields closed-form component-wise KL terms against a lognormal(-mixture)
prior in the learning objective. Moreover, the mixture expectation
$\mathbb{E}[\tau_{c,i}^{(n)}] = \sum_{j=1}^{K} w_{c,j}^{(n)} \tilde\tau_{c,j}^{(n)}$ is available in closed
form and is used in downstream computations such as computing ERG lags.

Thus EPDE exposes a posterior over next-event times and MELP carries uncertainty and multi-modality in inter-event intervals, instead of reducing noisy EEG transitions to a deterministic boundary or an opaque hidden feature.

\subsection{Event–relational graph (ERG)}
We weakly bias $\bar A^{(n)}$ toward observable statistics computed from $X^{(n)}$
(e.g., Pearson correlations $s^{(n)}_{ij}$ between channels $i$ and $j$) via a simple
Fisher–$z$ alignment. We map both the observed and ERG-implied correlations into
$z$-space:
\begin{equation}
\begin{aligned}
z^{\mathrm{obs},(n)}_{ij}&=\operatorname{atanh}\!\big(s^{(n)}_{ij}\big),\\
z^{\mathrm{pred},(n)}_{ij}&=\operatorname{atanh}\!\big(2\bar A^{(n)}_{ij}-1\big),
\end{aligned}
\label{eq:erg-fisher1}
\end{equation}
and define the ERG regularizer as
\begin{equation}
\mathcal{R}_{\mathrm{ERG}}^{(n)}
\;=\;
\sum_{i<j}\!\left[
\frac{\big(z^{\mathrm{obs},(n)}_{ij}-z^{\mathrm{pred},(n)}_{ij}\big)^2}{2\,\sigma^2}
+\frac{1}{2}\log\sigma^2
\right],
\label{eq:erg-fisher2}
\end{equation}
where $\sigma>0$ is a (fixed or globally learned) scale parameter controlling the strength
of the alignment. This regularizer (weighted by $\beta$ in the learning objective)
encourages consistency between ERG-implied connectivity and experimental statistics,
while still allowing the detailed edge structure to be driven primarily by event lags
inferred from the EPDE–MELP posterior. Pearson statistics are therefore weak observables, not graph labels: one-to-one agreement is neither expected nor desired, and an ERG that simply collapsed to Pearson correlation would add little beyond a standard correlation analysis. The implementation details and additional theoretical results are provided in Appendix \ref{sec:arch-train-brief} and \ref{B_}, respectively.

\section{Experiments}
In our experimental evaluations, we rigorously assessed the performance of LERD across synthetic benchmarks and real Alzheimer's disease (AD) EEG datasets. First, we validated LERD using synthetic event-sequence data, demonstrating its effectiveness in accurately inferring latent event dynamics compared to baseline neural ODE models. Subsequently, we conducted comprehensive experiments on two diverse EEG datasets (AD cohorts A and B), covering Alzheimer's disease, frontotemporal dementia, mild cognitive impairment, and healthy controls.

\subsection{Toy Dataset}

The synthetic benchmark uses three frequency bands, [5,10], [10,15], and [15,20] Hz. For each sequence we sample latent event rates from truncated normals centered in the corresponding band, draw event times from exponential inter-event intervals, and add controlled observation noise. This benchmark is included because, unlike real EEG, it provides known latent boundaries and rates, allowing direct structural recovery metrics such as boundary IoU rather than only downstream prediction accuracy.

The toy dataset experiments evaluated how well LERD and baseline neural ODE methods (NODE~\citep{chen2018neural}, ODE-RNN~\citep{habiba2020neural}, and STRODE~\citep{huang2021strode}) recover latent event dynamics across distinct frequency bands ([5–10], [10–15], and [15–20] Hz). As summarized in Table~\ref{tab:toy_results}, baseline methods achieved strong cosine similarity (CS) scores across all frequency bands, reflecting good sequence-level prediction performance. However, their intersection-over-union (IoU) scores were uniformly zero, indicating a fundamental limitation in capturing the latent event structure. This outcome highlights the common issue with purely data-driven neural approaches: despite excellent predictive accuracy, they often fail to recover the underlying generative mechanisms of data.

In contrast, LERD maintained similarly high CS scores while notably achieving meaningful IoU values (e.g., 0.473 at [5–10] Hz, 0.289 at [10–15] Hz, and 0.202 at [15–20] Hz). These non-zero IoU scores demonstrate that LERD captures latent structures consistent with the generative process and that its event-rate latents track the underlying oscillatory structure rather than acting as arbitrary embeddings. Moreover, the predicted event rates from LERD showed wider but informative uncertainty intervals, aligning closely with the true frequency band intervals. Such uncertainty quantification highlights LERD’s capacity to provide both accurate predictions and useful latent summaries in noisy and ambiguous settings.
Additional data generation protocols are provided in Appendix~\ref{sec:toyexp}.

Among the compared baselines, STRODE is the closest annotation-free timing model because it infers stochastic boundaries without event labels. NODE and ODE-RNN do not natively output latent event boundaries, and standard point-process/intensity models require observed event sequences or an external detector; this is why Table~\ref{tab:toy_results} treats their boundary IoU as a structural recovery diagnostic rather than as supervised event detection.

\begin{table}[t]
\centering
\footnotesize
\caption{Toy dataset results by frequency band. CS: Cosine Similarity, IoU: Intersection-over-Union.}
\label{tab:toy_results}
\begin{adjustbox}{max width=\columnwidth}
\begin{tabular}{l l c c c c}
\toprule
\textbf{Freq Band (Hz)} & \textbf{Model} & \textbf{CS} & \textbf{Median Rate} & \textbf{95\% CI} & \textbf{IoU} \\
\midrule
\mbox{[5, 10]} & NODE & 0.951 & 1.000 & [1.000, 1.000] & 0.000 \\
\mbox{[5, 10]} & ODE-RNN & 0.951 & 1.000 & [1.000, 1.000] & 0.000 \\
\mbox{[5, 10]} & STRODE & 0.967 & 0.340 & [0.269, 0.410] & 0.000 \\
\textbf{[5, 10]} & \textbf{LERD (Ours)} & \textbf{0.983} & \textbf{7.532} & \textbf{[4.300, 14.867]} & \textbf{0.473} \\
\midrule
\mbox{[10, 15]} & NODE & 0.951 & 1.000 & [1.000, 1.000] & 0.000 \\
\mbox{[10, 15]} & ODE-RNN & 0.951 & 1.000 & [1.000, 1.000] & 0.000 \\
\mbox{[10, 15]} & STRODE & 0.964 & 0.251 & [0.153, 0.348] & 0.000 \\
\textbf{[10, 15]} & \textbf{LERD (Ours)} & \textbf{0.982} & \textbf{12.503} & \textbf{[7.587, 24.918]} & \textbf{0.289} \\
\midrule
\mbox{[15, 20]} & NODE & 0.951 & 1.000 & [1.000, 1.000] & 0.000 \\
\mbox{[15, 20]} & ODE-RNN & 0.951 & 1.000 & [1.000, 1.000] & 0.000 \\
\mbox{[15, 20]} & STRODE & 0.961 & 0.532 & [0.369, 0.695] & 0.000 \\
\textbf{\mbox{[15, 20]}} & \textbf{LERD (Ours)} & \textbf{0.976} & \textbf{18.843} & \textbf{[10.465, 35.244]} & \textbf{0.202} \\
\bottomrule
\end{tabular}
\end{adjustbox}
\vspace{-5mm}
\end{table}

\subsection{Alzheimer's Disease EEG Dataset}

We extensively evaluated LERD on two diverse EEG datasets covering Alzheimer's disease (AD), frontotemporal dementia (FTD), mild cognitive impairment (MCI), and healthy controls. Across both datasets, LERD outperformed representative EEG classifiers and temporal/dynamical baselines such as CNNs, RNNs, and transformers. In Dataset AD cohort A, LERD showed a clear ability to distinguish AD, FTD, and healthy participants, with consistently lower performance variability across runs. This stability highlights its effectiveness in capturing subtle neural dynamics despite EEG heterogeneity. In the more unbalanced Dataset AD cohort B, where distinctions between moderate AD, MCI, and healthy controls are subtler, LERD still achieved the highest diagnostic accuracy. While some baselines showed significant performance fluctuations, LERD remained robust, balancing sensitivity and specificity—demonstrating resilience to noise and distribution shifts common in real-world EEG data. Overall, LERD yields physiology-aligned latent summaries alongside strong predictive performance, supporting analyses of group-level spectral/connectivity differences under a strict cross-subject protocol. Additional dataset descriptions and baseline methods are presented in Appendix~\ref{sec:toyexp}.

\begin{table}[H]
 \centering
 \caption{Results on Alzheimer's EEG datasets (mean $\pm$ std).}
 \label{tab:real_datasets}

 \begin{adjustbox}{max width=\columnwidth}
 \begin{tabular}{l r r r r}
 \hline
 \multirow{2}{*}{Model} & \multicolumn{2}{c}{Dataset AD cohort A} & \multicolumn{2}{c}{Dataset AD cohort B} \\
 \cline{2-3}\cline{4-5}
 & Accuracy (\%) & F1 (\%) & Accuracy (\%) & F1 (\%) \\
 \hline
 EEGNet & $68.10 \pm 7.10$ & $66.49 \pm 12.33$ & $71.37 \pm 26.02$ & $60.85 \pm 26.93$ \\
 LCADNet & $70.52 \pm 5.09$ & $68.12 \pm 7.69$ & $72.44 \pm 24.16$ & $49.38 \pm 15.84$ \\
 LSTM & $70.52 \pm 8.68$ & $68.24 \pm 9.59$ & $77.89 \pm 9.91$ & $61.35 \pm 20.35$ \\
 ATCNet & $64.71 \pm 6.80$ & $60.98 \pm 4.55$ & $71.09 \pm 32.50$ & $50.92 \pm 23.61$ \\
 ADFormer & $69.35 \pm 6.61$ & $65.28 \pm 7.87$ & $82.38 \pm 14.71$ & $63.89 \pm 12.09$ \\
 LEAD & $72.68 \pm 4.57$ & $69.98 \pm 9.61$ & $80.00 \pm 15.55$ & $62.21 \pm 14.82$ \\
 \textbf{LERD} &
 $\mathbf{75.03 \pm 8.29}$ & $\mathbf{72.69 \pm 8.16}$ &
 $\mathbf{89.82 \pm 8.39}$ & $\mathbf{64.87 \pm 10.67} $ \\
 \hline
 \end{tabular}
 \end{adjustbox}
 \vspace{-5mm}
\end{table}

The high standard deviation of several Cohort~B baselines should be read in light of the strict subject-level evaluation on a small and imbalanced cohort (notably only seven MCI subjects): a few subject-level majority-vote changes can produce large fold-to-fold swings. This is a baseline-specific stability issue under cross-subject aggregation rather than evidence of segment-level leakage.

\begin{table}[H]
 \centering
 \footnotesize
 \caption{Per-window forward time on the AD EEG setting. LERD is slower than very small CNNs, but remains practical for offline clinical EEG analysis and is faster than the transformer/foundation-model baselines in this comparison.}
 \label{tab:forward_time}
 \begin{adjustbox}{max width=\columnwidth}
 \begin{tabular}{lcc}
 \toprule
 \textbf{Model} & \textbf{Mean (ms)} & \textbf{Std (ms)} \\
 \midrule
 EEGNet & 0.713 & 0.013 \\
 LCADNet & 0.875 & 0.063 \\
 LSTM & 2.235 & 0.168 \\
 ATCNet & 2.801 & 2.243 \\
 LEAD & 11.148 & 0.115 \\
 ADFormer & 19.247 & 3.411 \\
 \textbf{LERD} & \textbf{9.209} & \textbf{1.139} \\
 \bottomrule
 \end{tabular}
 \end{adjustbox}
 \vspace{-3mm}
\end{table}

\subsection{Experimental Setup}
We evaluate LERD under a five–fold cross–subject protocol to ensure that generalization is assessed on previously unseen participants rather than unseen windows from the same individuals. For each cohort, subjects are partitioned into five disjoint folds with stratification at the subject level so that the class proportions of Alzheimer’s disease, frontotemporal dementia/mild cognitive impairment, and healthy controls are approximately preserved in every split. In each round, four folds are used for training, and one for testing; the roles of the folds are rotated until every fold serves exactly once in the held–out test set.

EEG is segmented into non–overlapping two–second windows (1{,}000 samples at 500\,Hz) per subject and channel, followed by channel–wise $z$–normalization computed within the training portion of the active fold and then applied to validation and test windows of that fold. 

Evaluation is conducted at subject levels. Subject–level predictions aggregate a subject’s windows by majority vote over window–wise labels. We report accuracy, macro–F1, and summarize performance as the mean and standard deviation across the five test folds.

\begin{table}[t]
 \centering
 \caption{Ablation of spike-informed and connectome priors in \textbf{LERD} on Alzheimer's EEG Datasets. Accuracy and F1 (\%) reported as mean $\pm$ std.}
 \label{tab:ablation}
 \renewcommand{\arraystretch}{1.12}

 \begin{threeparttable}
 \begin{adjustbox}{max width=\columnwidth}
 \begin{tabular}{c l c c}
 \hline
 \textbf{Dataset} & \textbf{Variant} & \textbf{Accuracy (\%)} & \textbf{F1 (\%)} \\
 \hline
 \multirow{4}{*}{\textbf{AD cohort A}}
 & No prior & 70.52 $\pm$ 11.83 & 65.46 $\pm$ 13.10 \\
 & dLIF prior & 73.92 $\pm$ 9.84 & 71.41 $\pm$ 10.72 \\
 & ERG prior & 72.75 $\pm$ 6.63 & 70.35 $\pm$ 7.91 \\
 & Dual priors & \textbf{75.03} $\pm$ 8.29 & \textbf{72.69} $\pm$ 8.16 \\
 \hline
 \multirow{4}{*}{\textbf{AD cohort B}}
 & No prior & 83.22 $\pm$ 15.10 & 60.72 $\pm$ 12.37 \\
 & dLIF prior & 87.98 $\pm$ 8.09 & 62.95 $\pm$ 9.24 \\
 & ERG prior & 86.20 $\pm$ 9.96 & \textbf{65.63} $\pm$ 9.17 \\
 & Dual priors & \textbf{89.82} $\pm$ 8.39 & 64.87 $\pm$ 10.67 \\
 \hline
 \end{tabular}
 \end{adjustbox}

 \end{threeparttable}
\end{table}

\subsection{Ablation Studies}
We ablated LERD to assess each prior’s impact on predictive performance (Table~\ref{tab:ablation}). On AD Cohort~A, the no-prior baseline is moderate; adding the dLIF prior boosts accuracy, while the ERG prior improves accuracy and F1 via inter-channel modeling. Combining both yields the best scores. On AD Cohort~B, the pattern holds with larger gains: the dLIF prior gives the biggest accuracy lift under subtler classes, and the ERG prior raises F1. Their combination again performs best. Overall, each prior helps, and together they provide robust predictions on realistic EEG. One-factor sweeps in Appendix~\ref{sec:sensitivity} show that performance is not concentrated at a knife-edge setting of the auxiliary weights.

\begin{figure}[t]
 \centering
 \begin{subfigure}[H]{0.5\linewidth} \centering \includegraphics[width=0.9\linewidth]{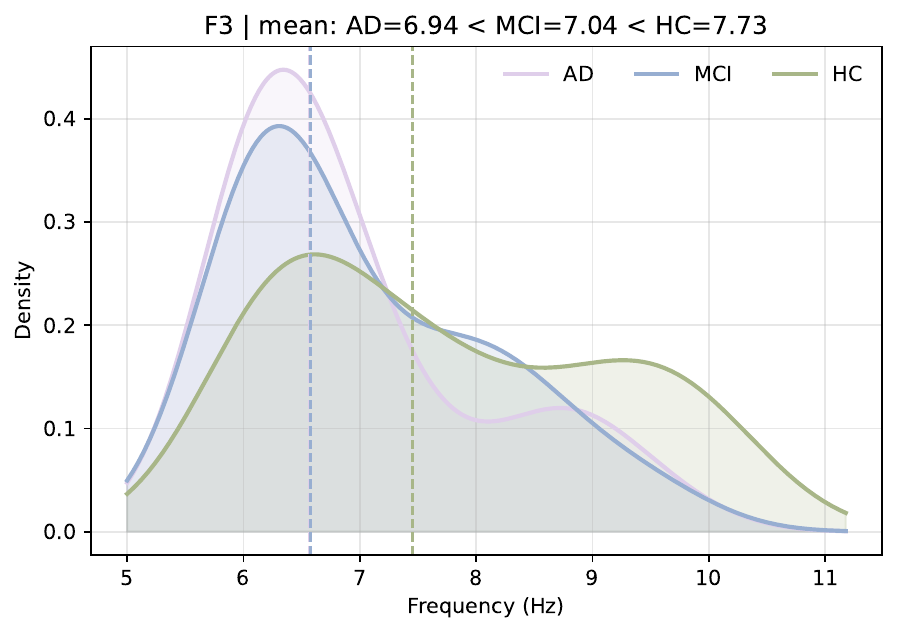} \caption{Channel F3} \label{fig:2x2:a} \end{subfigure}\hfill \begin{subfigure}[H]{0.5\linewidth} \centering \includegraphics[width=0.9\linewidth]{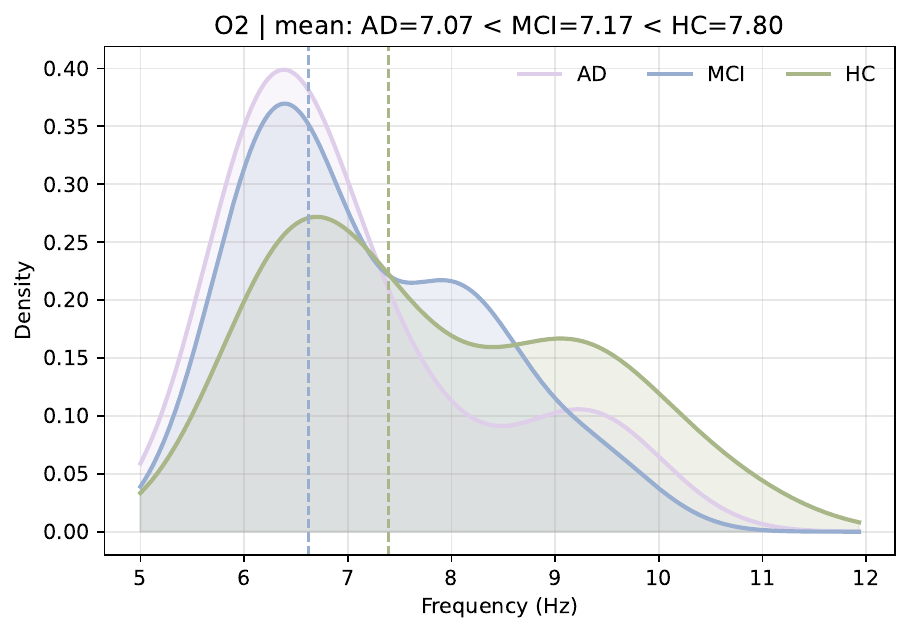} \caption{Channel O2} \label{fig:2x2:b} \end{subfigure} \medskip 
 \begin{subfigure}[H]{0.5\linewidth} \centering \includegraphics[width=0.9\linewidth]{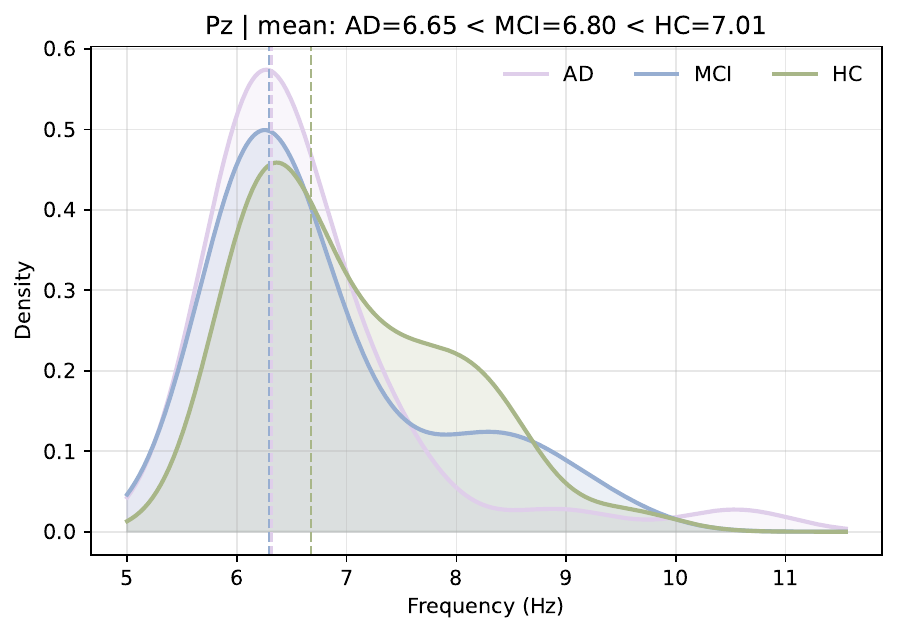} \caption{Channel Pz} \label{fig:2x2:c} \end{subfigure}\hfill \begin{subfigure}[H]{0.5\linewidth} \centering \includegraphics[width=0.9\linewidth]{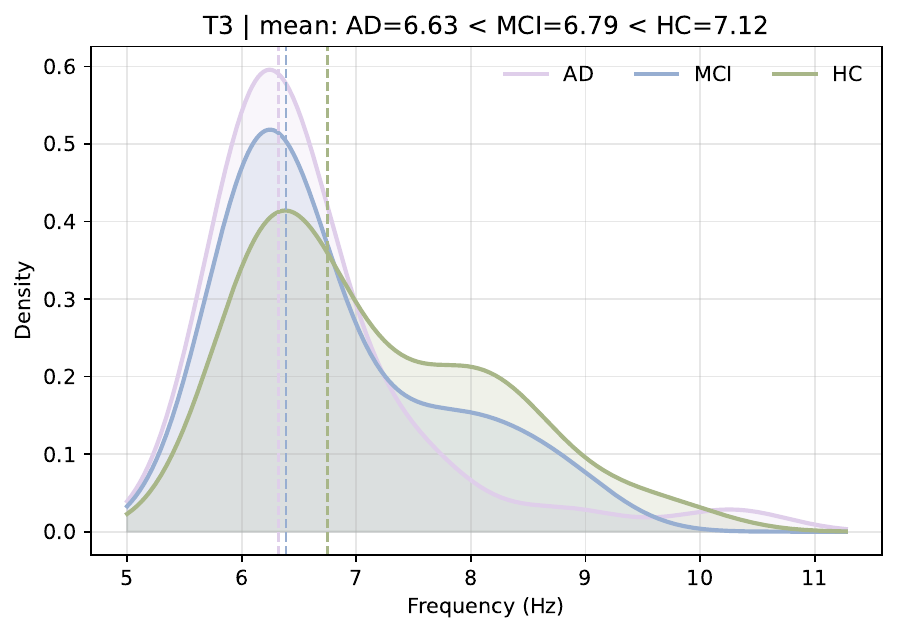} \caption{Channel T3} \label{fig:2x2:d} \end{subfigure}

 \caption{Kernel density estimates of the inferred dLIF frequency distributions across Alzheimer's disease (AD), mild cognitive impairment (MCI), and healthy control (HC) groups for EEG channels F3, O2, Pz, and T3. The decreasing central frequency with increasing disease severity is consistent with established AD EEG slowing and serves as a model-derived latent summary.}
 \label{fig:frequency_visualization}
 \vspace{-7mm}
\end{figure}

\begin{figure*}[t]
\centering

\begin{subfigure}[t]{0.16\textwidth}
 \centering
 \includegraphics[width=\linewidth]{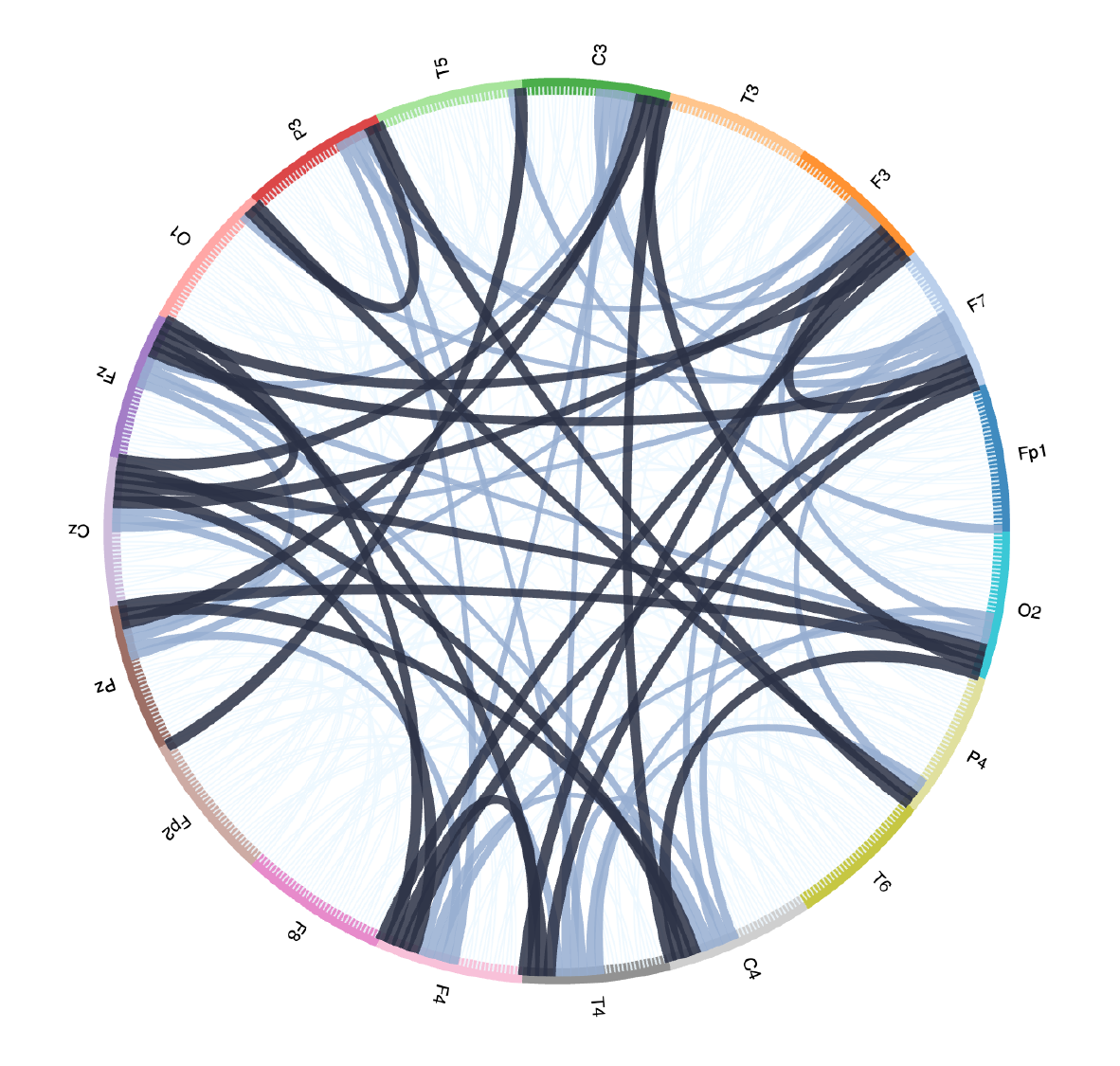}
 \caption{LERD\\(HC)}
 \label{fig:grp:hc_bayes}
\end{subfigure}\hfill
\begin{subfigure}[t]{0.16\textwidth}
 \centering
 \includegraphics[width=\linewidth]{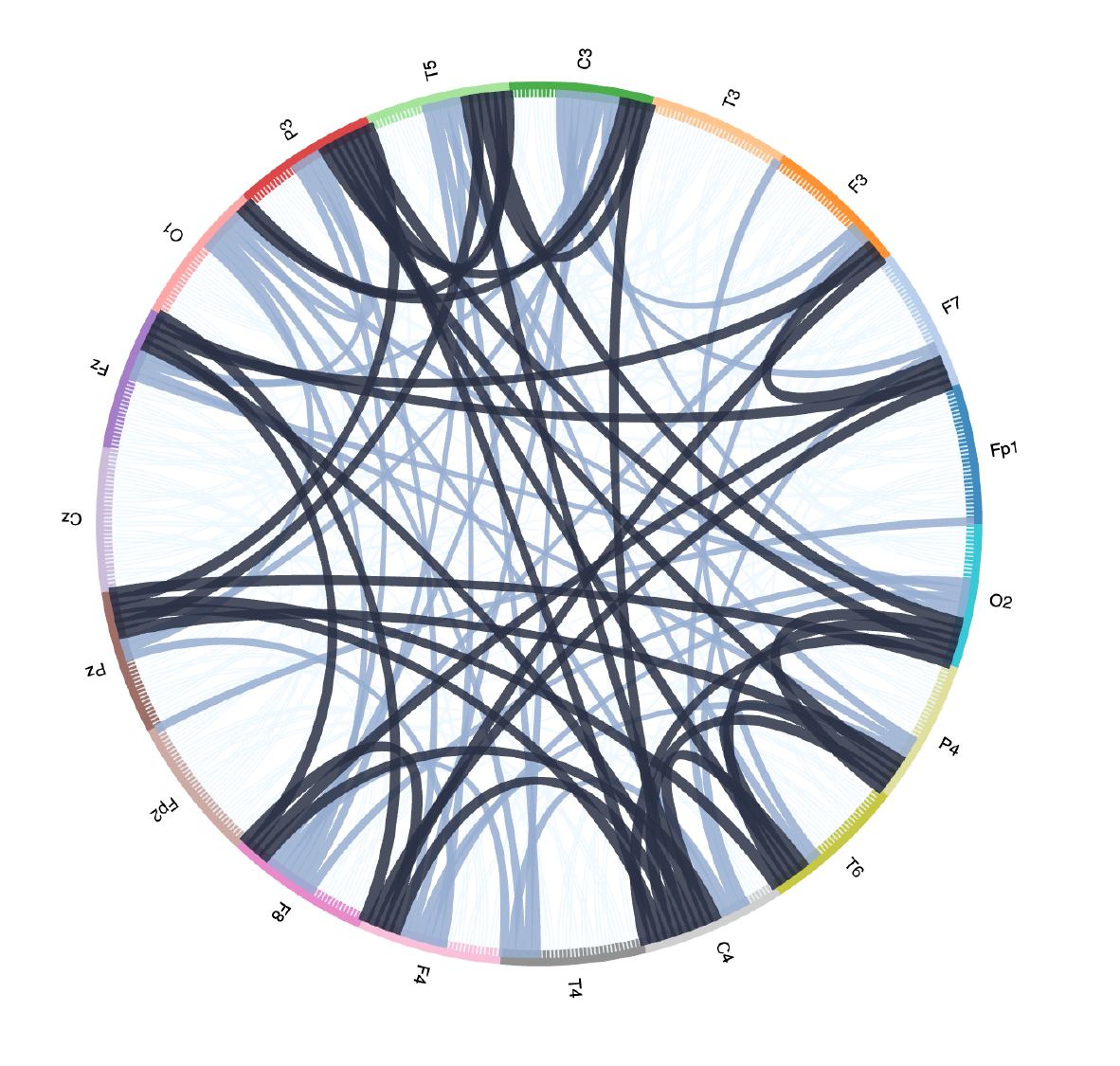}
 \caption{Pearson \\ (HC)}
 \label{fig:grp:hc_pearson}
\end{subfigure}\hfill
\begin{subfigure}[t]{0.16\textwidth}
 \centering
 \includegraphics[width=\linewidth]{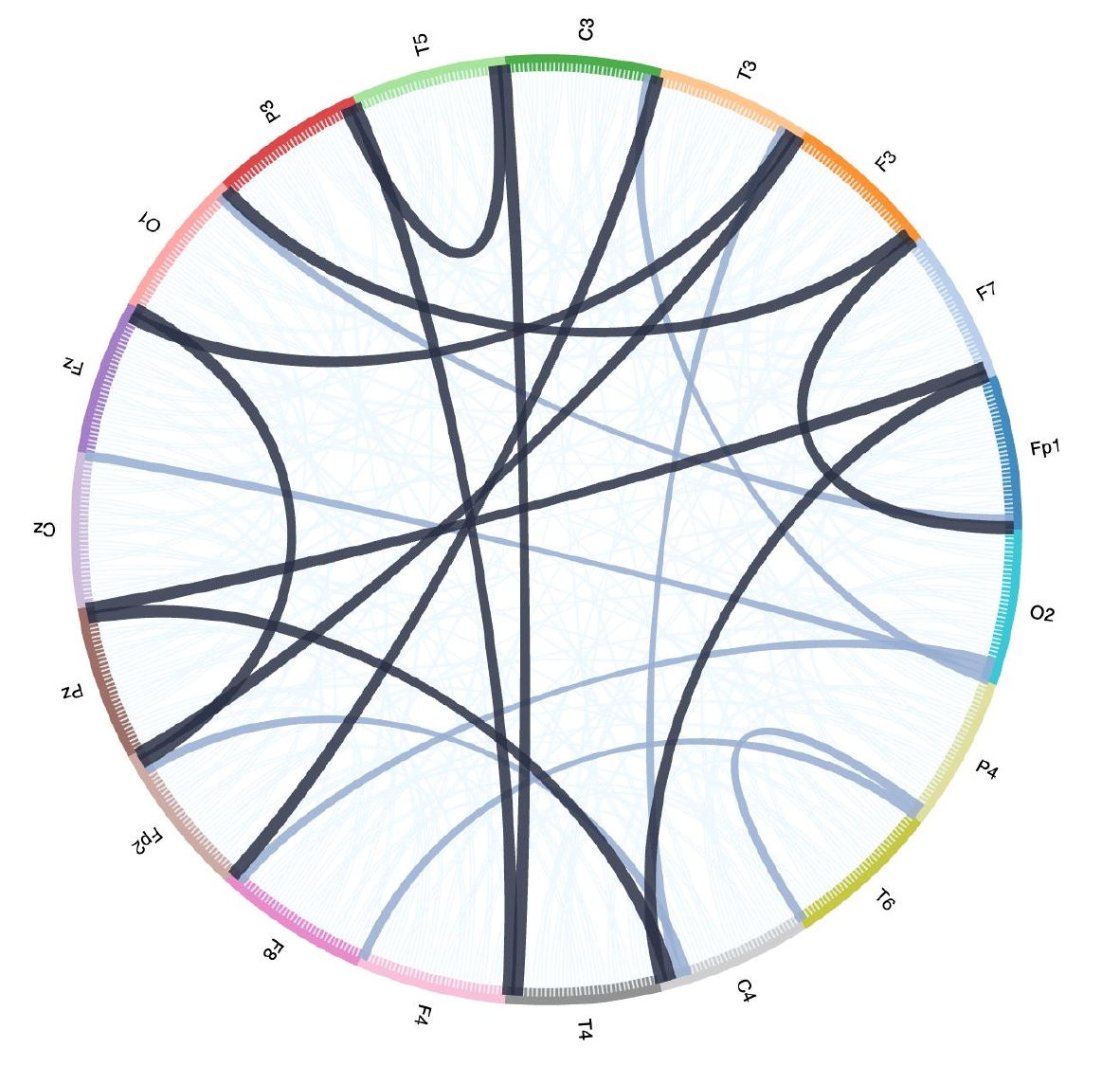}
 \caption{LERD\\ (FTD)}
 \label{fig:grp:ftd_bayes}
\end{subfigure}\hfill
\begin{subfigure}[t]{0.16\textwidth}
 \centering
 \includegraphics[width=\linewidth]{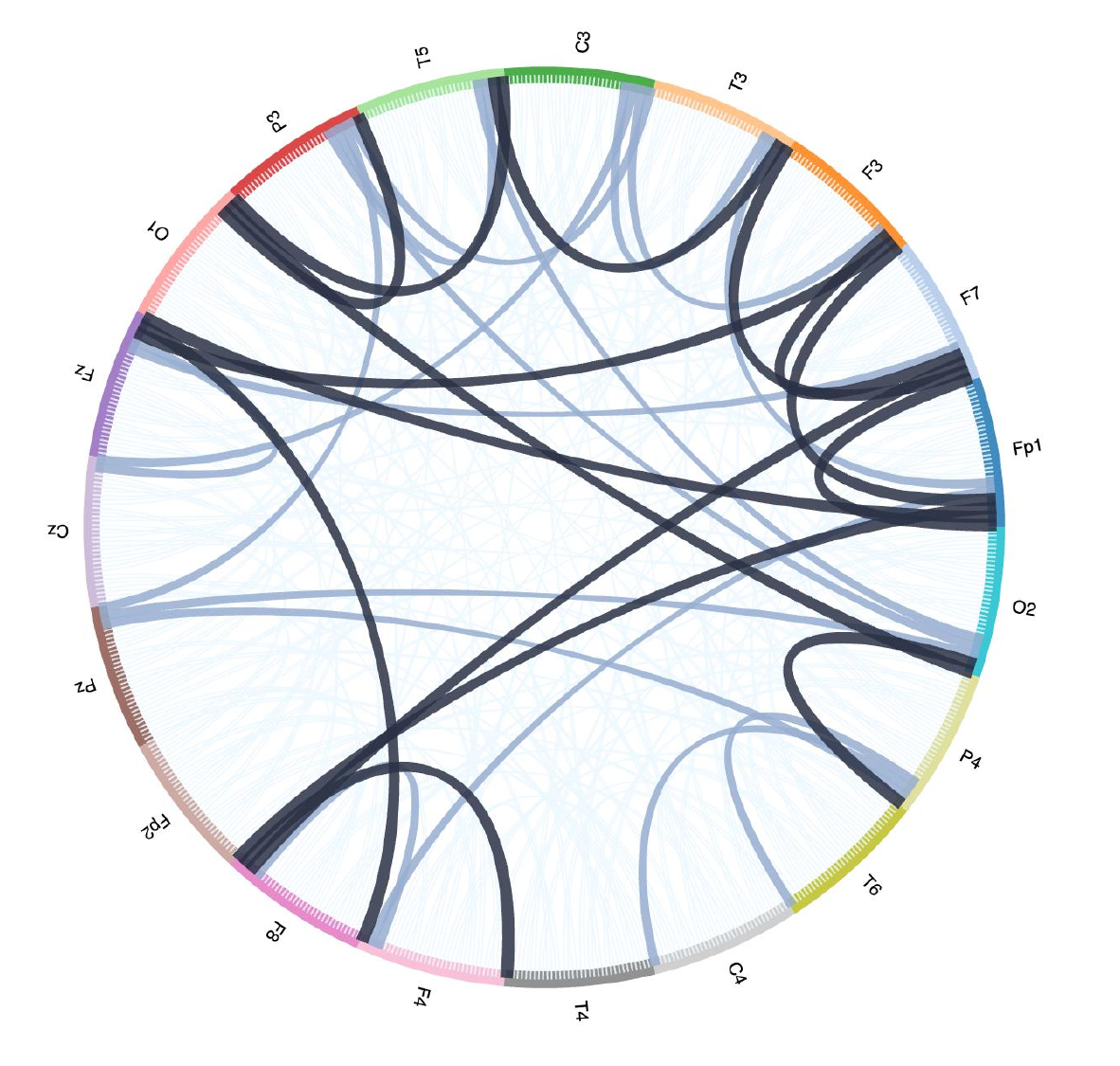}
 \caption{Pearson\\ (FTD)}
 \label{fig:grp:ftd_pearson}
\end{subfigure}\hfill
\begin{subfigure}[t]{0.16\textwidth}
 \centering
 \includegraphics[width=\linewidth]{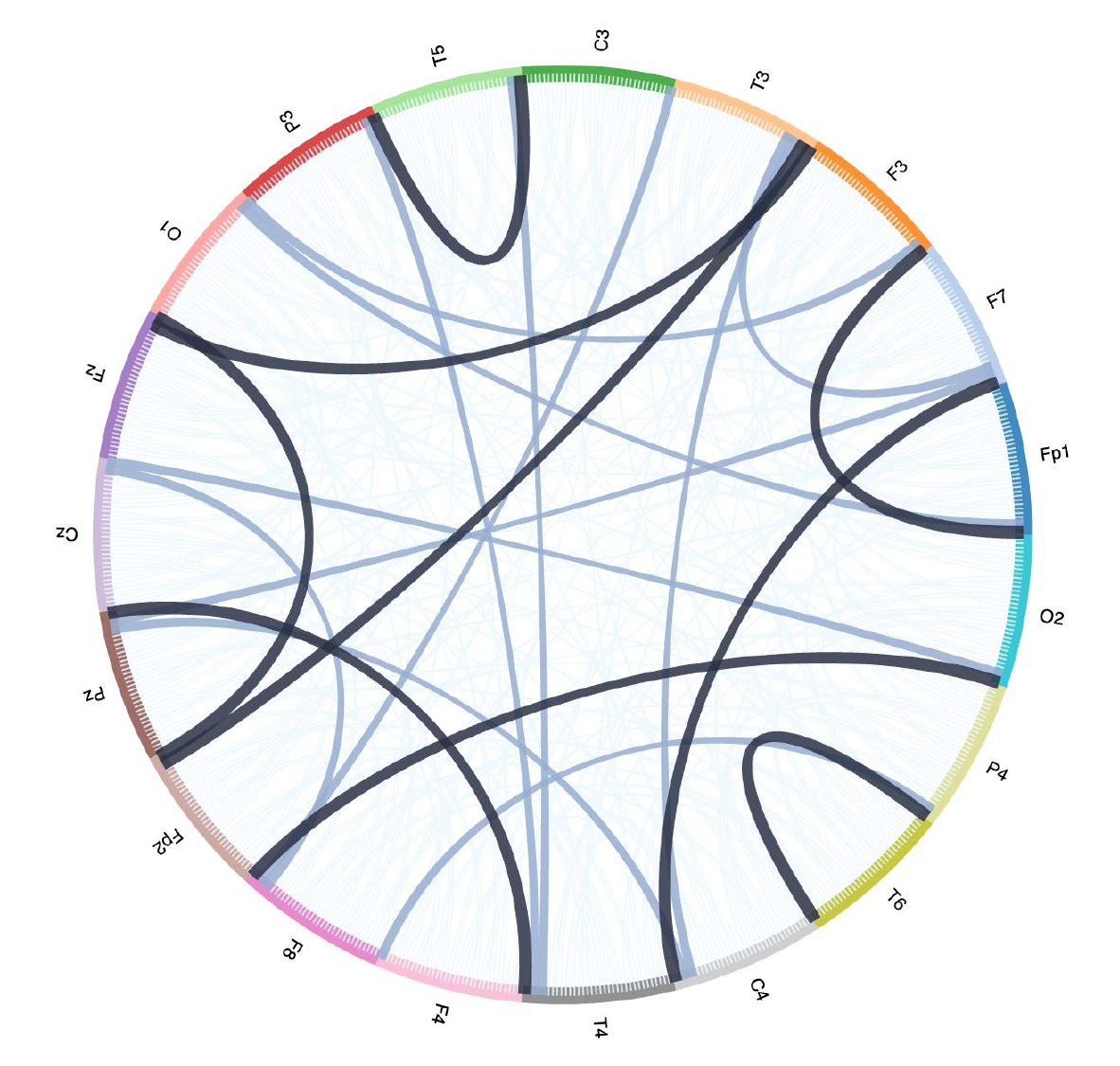}
 \caption{LERD\\ (AD)}
 \label{fig:grp:ad_bayes}
\end{subfigure}\hfill
\begin{subfigure}[t]{0.16\textwidth}
 \centering
 \includegraphics[width=\linewidth]{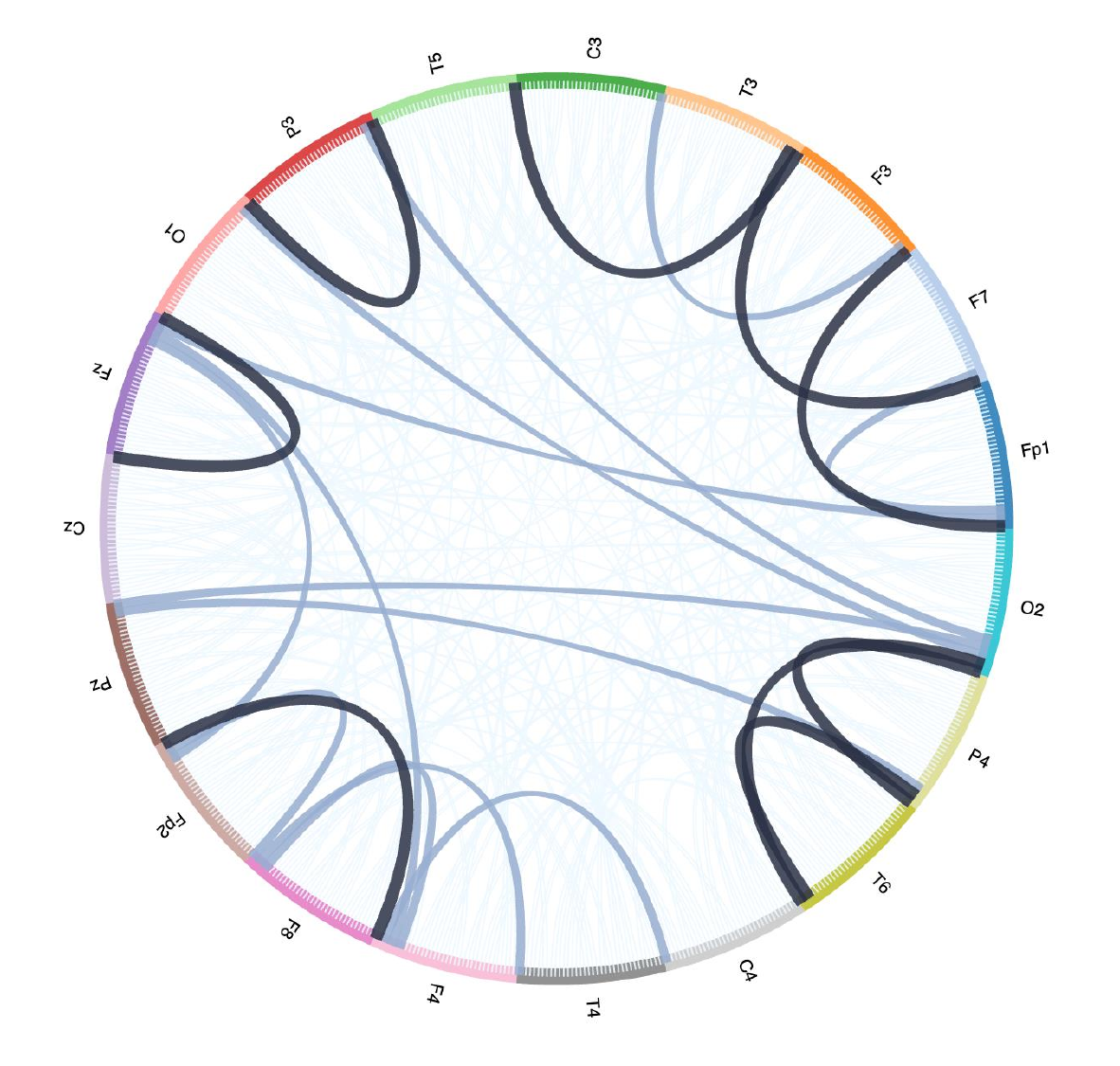}
 \caption{Pearson \\ (AD)}
 \label{fig:grp:ad_pearson}
\end{subfigure}

\caption{Comparison of EEG connectivity graphs inferred by LERD versus Pearson correlation-based priors across healthy controls (HC), frontotemporal dementia (FTD), and Alzheimer's disease (AD) groups.}
\label{fig:graphs}
\vspace{-5mm}
\end{figure*}

\section{Visualizations}
Our visualizations summarize three model-derived quantities: dLIF-constrained latent rates, timing-derived event-relational graphs, and synthetic boundary-time recovery with ground truth.

\begin{figure}[H]
\centering
\captionsetup[subfigure]{font=scriptsize}
\vspace{-1mm}
\begin{subfigure}[t]{0.48\linewidth}
 \centering
 \includegraphics[width=\linewidth]{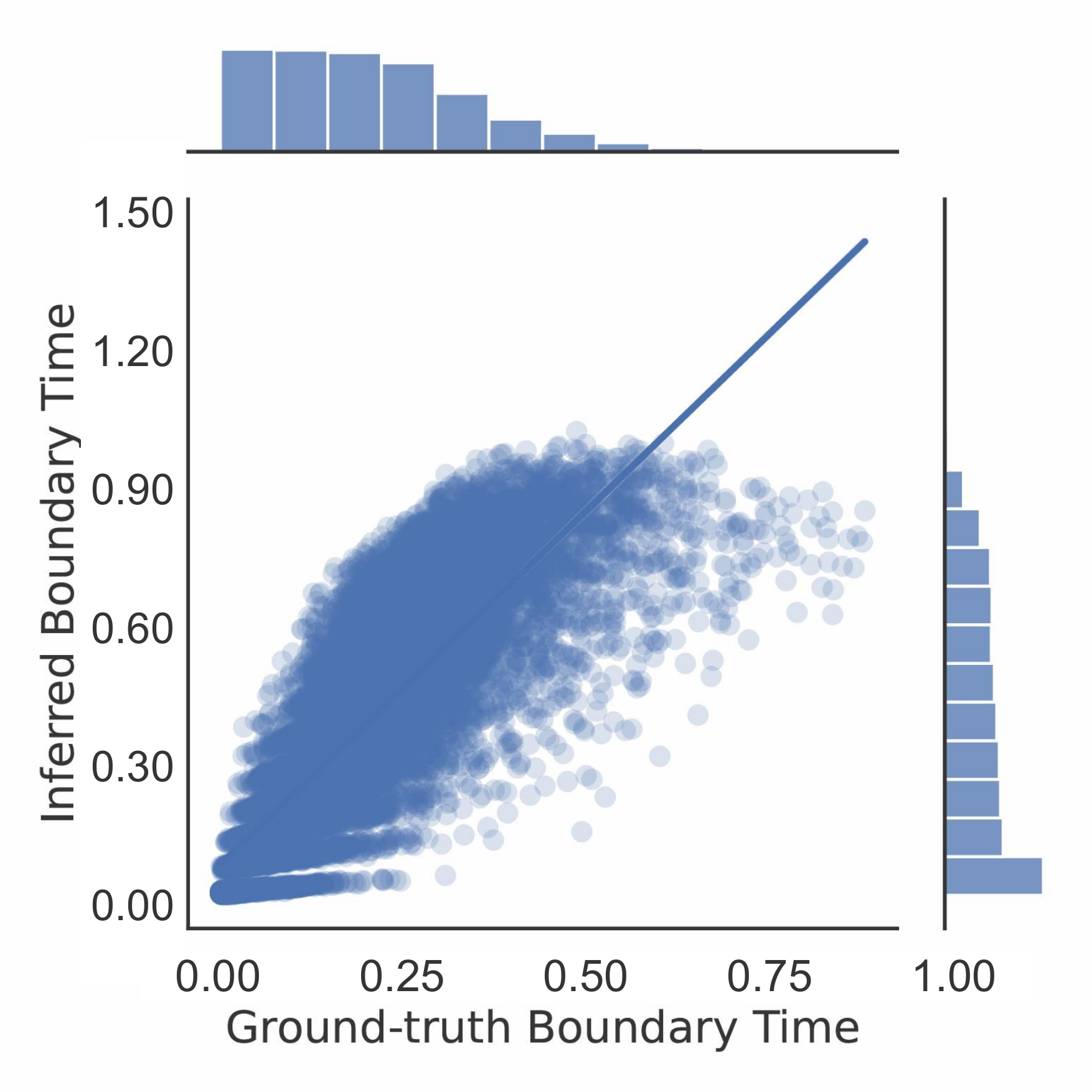}
 \caption{STRODE, 5--10}
\end{subfigure}\hfill
\begin{subfigure}[t]{0.48\linewidth}
 \centering
 \includegraphics[width=\linewidth]{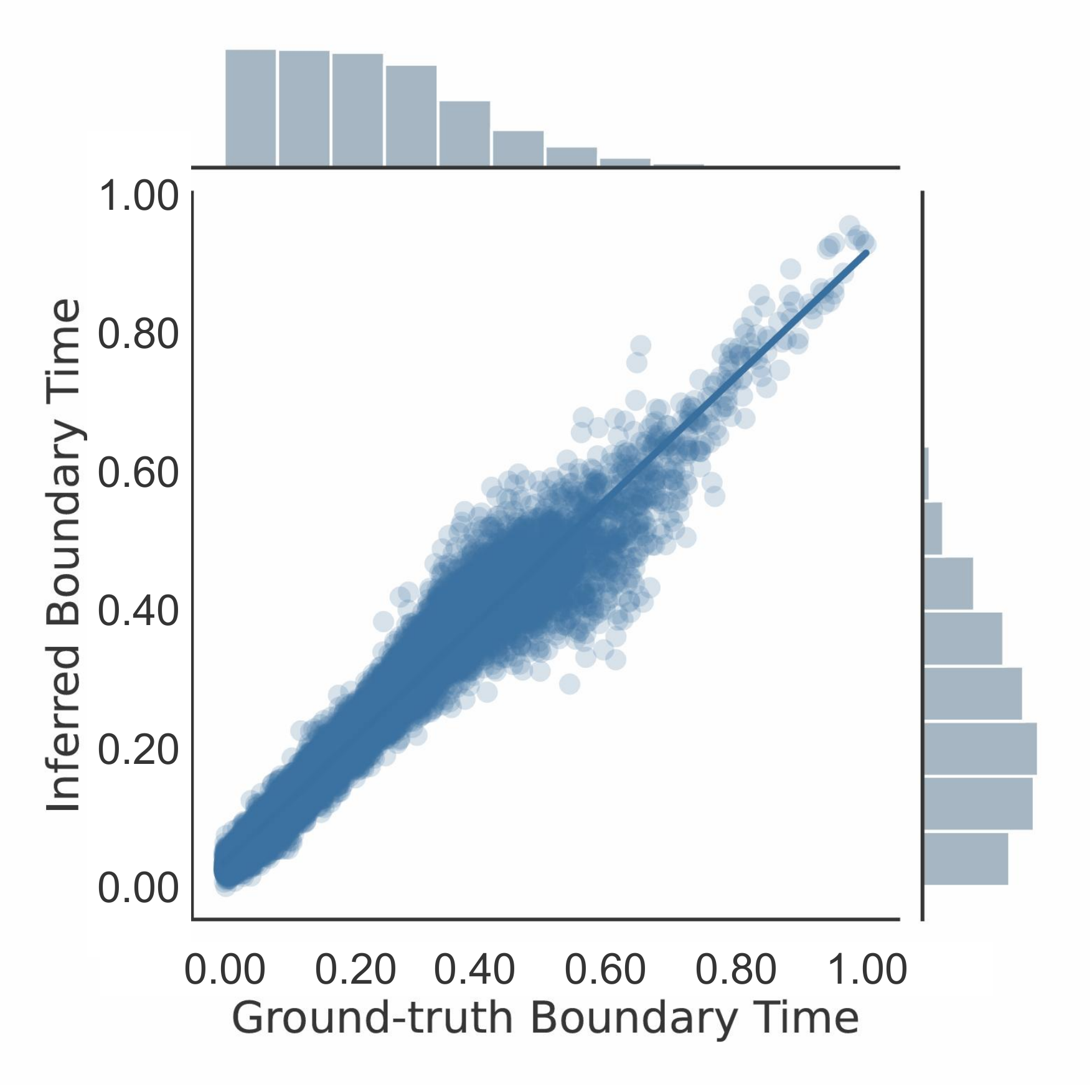}
 \caption{LERD, 5--10}
\end{subfigure}
\vspace{-1mm}
\begin{subfigure}[t]{0.48\linewidth}
 \centering
 \includegraphics[width=\linewidth]{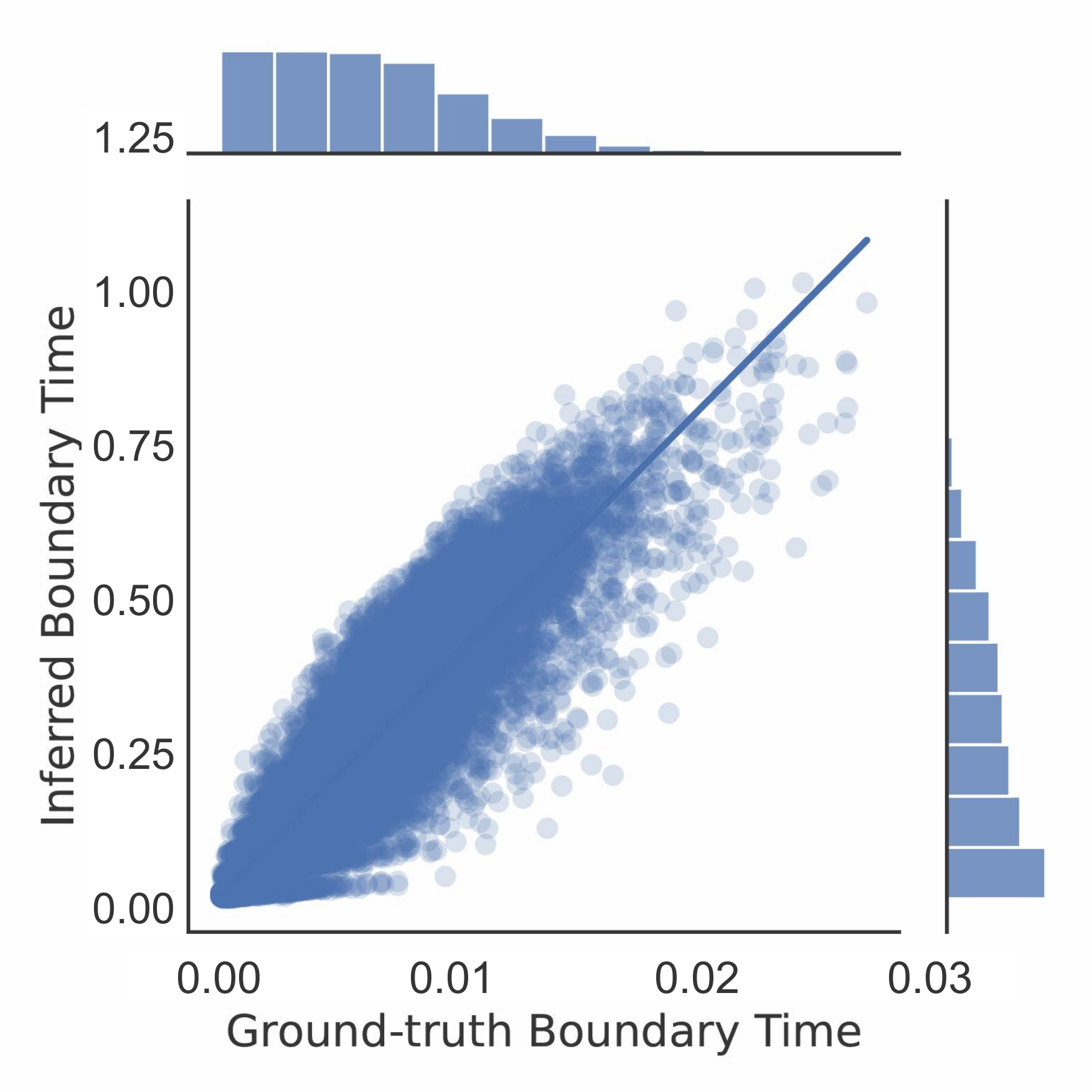}
 \caption{STRODE, 10--15}
\end{subfigure}\hfill
\begin{subfigure}[t]{0.48\linewidth}
 \centering
 \includegraphics[width=\linewidth]{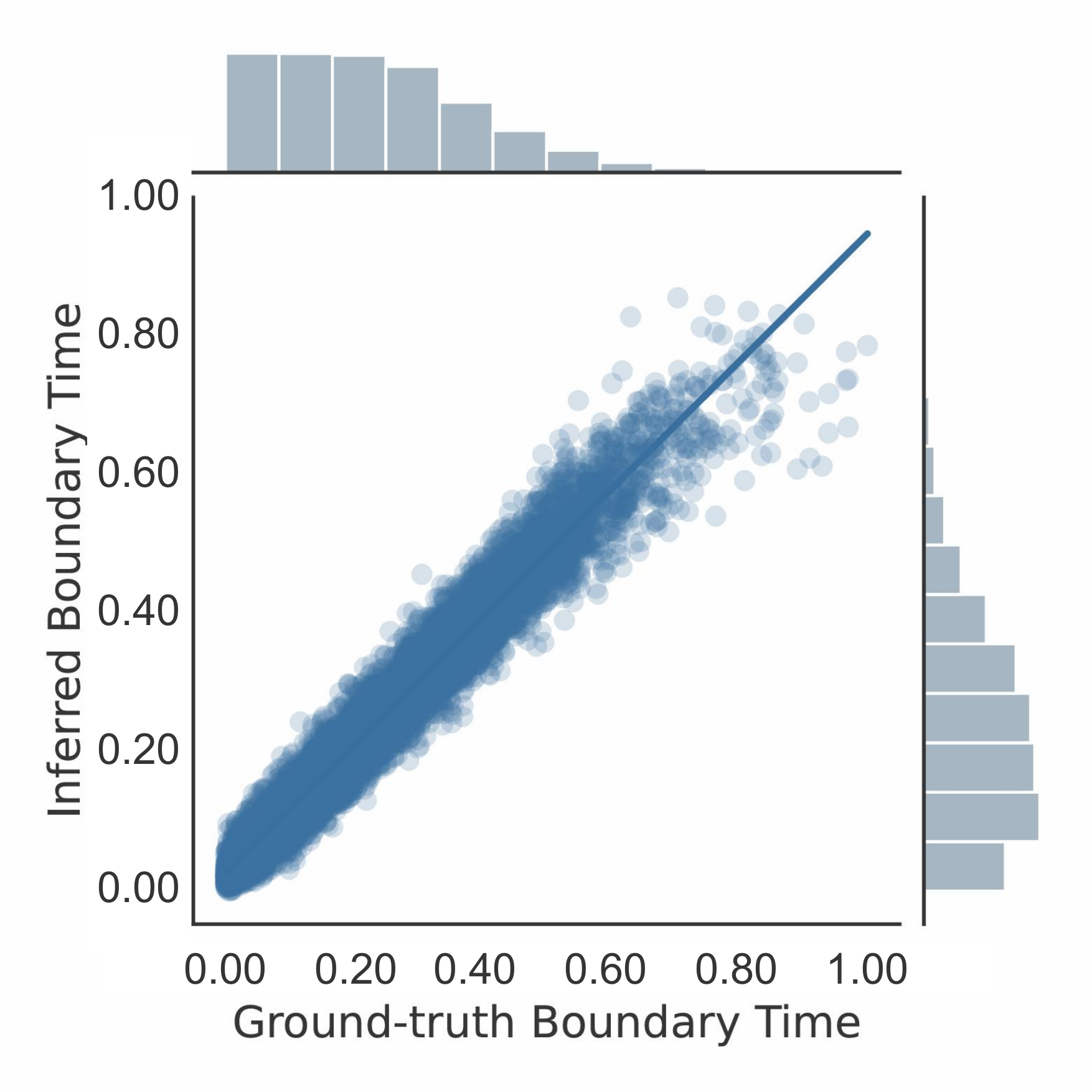}
 \caption{LERD, 10--15}
\end{subfigure}
\vspace{-1mm}
\begin{subfigure}[t]{0.48\linewidth}
 \centering
 \includegraphics[width=\linewidth]{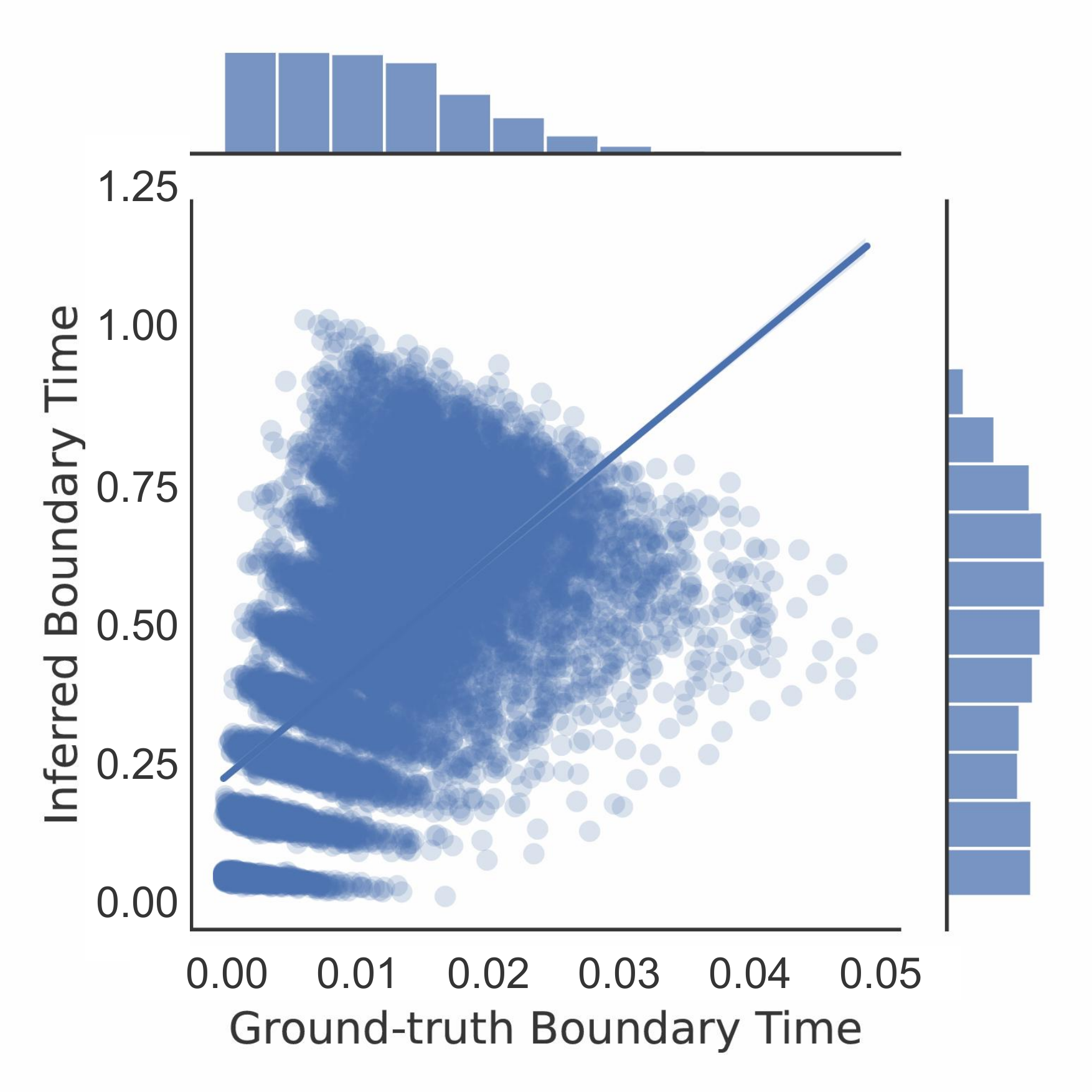}
 \caption{STRODE, 15--20}
\end{subfigure}\hfill
\begin{subfigure}[t]{0.48\linewidth}
 \centering
 \includegraphics[width=\linewidth]{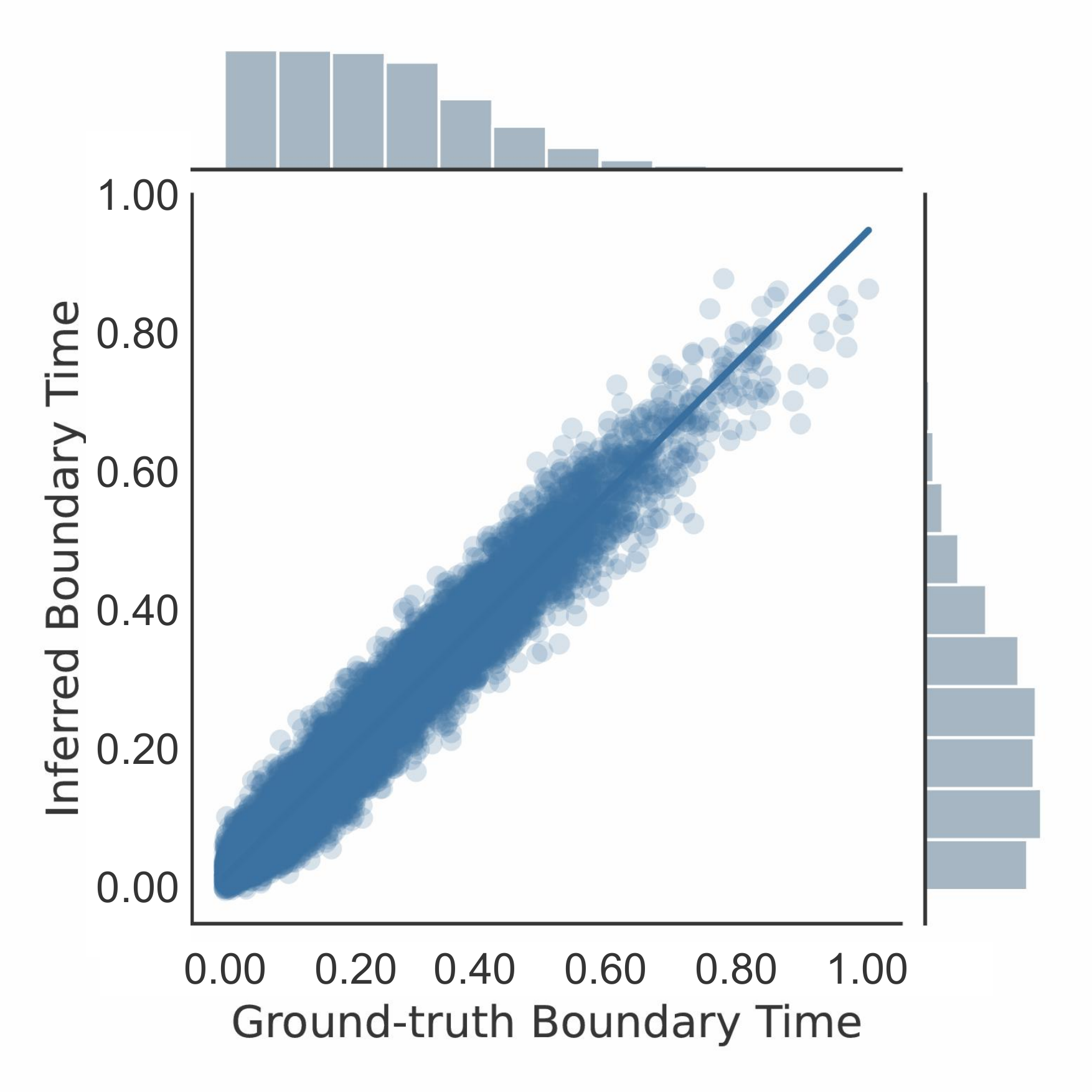}
 \caption{LERD, 15--20}
\end{subfigure}
\caption{Predicted vs.~ground-truth boundary times across frequency bands: STRODE vs.~LERD.}
\label{fig:six-pdfs}
\vspace{-4mm}
\end{figure}

\subsection{dLIF Inferred Frequency Visualization}
Figure~\ref{fig:frequency_visualization} shows inferred dLIF frequency distributions for representative frontal, occipital, parietal and temporal channels. The distributions follow the expected AD-related slowing trend: HC tends to have higher central latent frequency, MCI is intermediate, and AD shifts lower. The four plotted channels span distinct regions rather than selected effects; Appendix~\ref{sec:additional_channels} reports the remaining 19 channels, where HC is highest in 18/19 and higher than MCI in all 19.

\subsection{Graph Connectivity Visualization}
Figure~\ref{fig:graphs} compares LERD's event-relational graphs (ERGs) with Pearson connectivity graphs. HC shows denser timing-derived connectivity, while FTD and AD exhibit sparser/weaker links, consistent with dementia EEG findings. Pearson similarity is expected only weakly because the Fisher--$z$ term regularizes but does not supervise the ERG. We interpret ERGs as sensor-level timing summaries, not source-resolved causal connectivity.

\subsection{Boundary Time Prediction Visualization}
Figure~\ref{fig:six-pdfs} visualizes predicted versus ground-truth synthetic boundary times. STRODE has native annotation-free boundary inference and is therefore the most meaningful visual comparator; NODE and ODE-RNN require ad hoc pseudo-boundaries and show near-zero IoU in Table~\ref{tab:toy_results}. LERD stays closer to the diagonal across all bands, matching the quantitative boundary-recovery gains.

\enlargethispage{3.0\baselineskip}
\vspace{-3mm}
\section{Conclusion}\vspace{-1.5mm}
LERD is a Bayesian latent event--relational dynamical system for annotation-free recovery of population-level EEG events and timing-derived interaction graphs. By combining EPDE, MELP and an electrophysiology-inspired dLIF prior, LERD yields competitive AD classification together with rate, timing and graph summaries aligned with known neurodegenerative EEG phenomena. The theory provides a computable IVP event-prior KL representation and graph-stability guarantees, while experiments on synthetic and real EEG data demonstrate accurate latent recovery and strong diagnostic performance.

\clearpage
\section*{Impact Statement}
This paper presents work whose goal is to advance the field of machine learning for modeling and classifying neurodegenerative disease from EEG. If deployed responsibly, such models could support earlier screening and improve understanding of disease-related electrophysiological dynamics.

Potential negative impacts include misuse as a standalone diagnostic tool, leading to false positives/negatives and downstream harm; reduced performance under dataset shift (different sites, devices, demographics, or protocols); and privacy risks because EEG is sensitive health data. To mitigate these risks, we emphasize that LERD is intended as a research and decision-support model rather than a replacement for clinical judgment, and that any real-world use should include rigorous external validation, fairness and subgroup analyses, uncertainty reporting, and strong data governance (consent, de-identification, and secure storage).

\section*{Acknowledgments}
We thank all reviewers, SPC, and AC for their valuable comments. S.B. acknowledges support from the Novo Nordisk Foundation via The Novo Nordisk Young Investigator Award (NNF20OC0059309). S.B. acknowledges support from The Eric and Wendy Schmidt Fund For Strategic Innovation via the Schmidt Polymath Award (G-22-63345) which also supports HH and LM. S.B. acknowledges support from the Novo Nordisk Foundation via the Global Pathogen Analysis Platform (GPAP) (NNF26SA0109818). Z.J. acknowledges support from the Youth Science Fund Project of the National Natural Science Foundation of China (Grant No. 62306317), the Open Research Fund of the Key Laboratory of Social Computing and Cognitive Intelligence (Dalian University of Technology), Ministry of Education (Grant No. SCCI2025YB01), the Beijing Nova Program (Grant No. 20250484804), and the Young Elite Scientists Sponsorship Program of the Beijing High Innovation Plan (Grant No. 20250912).

\FloatBarrier
\bibliography{example_paper}
\bibliographystyle{icml2026}

\newpage
\appendix
\onecolumn

\section{Theory}
\label{B_}
\begin{lemma}[Shift–stability of IVPs \citep{huang2021strode}]\label{lem:shift}
Let $e>0$ and $U\subset\mathbb{R}^n$ be open. Let $f_1,f_2:[a-2e,a)\to\mathbb{R}^n$ be continuously differentiable with $\|f_1'\|\le M$ for some $M>0$. Consider
\[
y_1'(t)=f_1(t),\quad y_1(a-e)=x_1,\qquad
y_2'(t)=f_2(t)=f_1(t-e),\quad y_2(a-e)=x_2 .
\]
Then, as $e\to 0^+$,
\[
\lim_{e\to 0^+}\Big(\lim_{t\nearrow a}\|y_1(t)-y_2(t)\|\Big)\;\le\;\lim_{e\to 0^+}\|x_1-x_2\|.
\]
\end{lemma}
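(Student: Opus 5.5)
We integrate both IVPs explicitly and bound the difference on the short interval $[a-e,t]$. The right-hand sides depend only on $t$, not on $y$, so no Gronwall argument is needed. For $t\in[a-e,a)$ we have
\[
y_1(t)-y_2(t)=x_1-x_2+\int_{a-e}^{t}\big(f_1(s)-f_1(s-e)\big)\,ds .
\]
We need $s-e\ge a-2e$ for the shifted argument to lie in the domain of $f_1$. This holds because $s\ge a-e$, so the shift never leaves $[a-2e,a)$.

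\textbf{Step 1: bound the integrand.} The bound $\|f_1'\|\le M$ and the mean value inequality, applied componentwise or via the integral form $f_1(s)-f_1(s-e)=\int_{s-e}^{s}f_1'(u)\,du$, give
\[
\|f_1(s)-f_1(s-e)\|\le Me .
\]

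\textbf{Step 2: integrate over an interval of length at most $e$.} This yields, uniformly in $t\in[a-e,a)$,
\[
\|y_1(t)-y_2(t)\|\le \|x_1-x_2\|+Me\,(t-a+e)\le \|x_1-x_2\|+Me^2 .
\]

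\textbf{Step 3: take the limit $t\nearrow a$.} Both $y_i$ are Lipschitz on $[a-e,a)$. For $y_1$ this follows from boundedness of $f_1$ on $[a-2e,a)$, which in turn follows from the derivative bound via $\|f_1(s)\|\le\|f_1(a-2e)\|+2Me$. The same bound covers $f_2$, since its values are values of $f_1$. Hence $y_1$ and $y_2$ are Cauchy as $t\nearrow a$, so the inner limit exists. Passing to the limit in Step 2 gives
\[
\lim_{t\nearrow a}\|y_1(t)-y_2(t)\|\le\|x_1-x_2\|+Me^2 .
\]

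\textbf{Step 4: send $e\to0^+$.} The $Me^2$ term vanishes. Monotonicity of limits, under the standing interpretation that the limits exist (or reading both sides as $\limsup$), gives the stated inequality.

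The main obstacle is not analytic but a matter of interpretation: justifying the existence of the limits, and noting that $M$ and the data $x_1,x_2$ may depend on $e$. If $M$ depends on $e$, Step 2 still works provided $Me^2\to0$. I would state this explicitly and treat the outer limits as $\limsup$ to be safe.
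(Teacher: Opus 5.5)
The paper itself contains no proof of this lemma. It is imported from STRODE by citation, and the appendix only proves Theorem~\ref{thm:ivp-kl-dlif}, Theorem~\ref{thm:erg-stability-detailed} and Corollary~\ref{cor:erg-stability-detailed}, none of which uses it. There is therefore nothing in the paper to compare against. Judged on its own, your argument is correct and complete.

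Both right-hand sides depend only on $t$. Explicit integration together with $f_1(s)-f_1(s-e)=\int_{s-e}^{s}f_1'(u)\,du$ therefore gives $\|y_1(t)-y_2(t)\|\le\|x_1-x_2\|+Me^2$ uniformly for $t\in[a-e,a)$, and that is all the lemma needs. You also correctly repair the loose points of the statement:
\begin{itemize}
\item only $t\in[a-e,a)$ is ever used, so $f_2=f_1(\cdot-e)$ is well defined there;
\item the outer limits should be read as $\limsup$ (or assumed to exist);
\item $M$ must be uniform in $e$, or at least satisfy $Me^2\to0$.
\end{itemize}

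A few minor refinements:
\begin{itemize}
\item Use the integral form rather than the componentwise mean value inequality. In the Euclidean norm the componentwise version only yields $\sqrt{n}\,Me$, which is harmless here but is not the bound you wrote.
\item The inner limit exists more directly than via your bound on $\|f_1\|$. Since $(y_1-y_2)'(t)=f_1(t)-f_1(t-e)$ is bounded by $Me$, the difference $y_1-y_2$ is Lipschitz on $[a-e,a)$ and hence Cauchy as $t\nearrow a$.
\item By the reverse triangle inequality your estimate is two-sided, $\bigl|\|y_1(t)-y_2(t)\|-\|x_1-x_2\|\bigr|\le Me^2$. The lemma's inequality is therefore an equality whenever the right-hand limit exists.
\end{itemize}
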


\begin{theorem}[Entry–wise and matrix stability]\label{thm:erg-stability-detailed}
The exponential edge map is globally $\alpha$–Lipschitz: for all $x,y\in\mathbb{R}$,
\begin{equation}
\big|\phi_\alpha(x)-\phi_\alpha(y)\big| \le \alpha\,|x-y|.
\label{eq:phi-lip}
\end{equation}
Consequently, for any $(i,j,t,T)$,
\begin{equation}
\big|\widetilde e_{ij}(t;T)-e_{ij}(t;T)\big|
\;\le\;\alpha\,|\xi_{ij}(t;T)| .
\label{eq:edge-lip-detailed}
\end{equation}
Averaging over time and Monte–Carlo samples yields the entry–wise bound
\begin{equation}
\big|\widetilde{\bar A}_{ij}-\bar A_{ij}\big|
\;\le\;\alpha\,\overline{|\xi_{ij}|},
\qquad
\overline{|\xi_{ij}|}\;:=\;\frac{1}{MS}\sum_{m=1}^{M}\sum_{s=1}^{S} |\xi_{ij}(t_m;T^{(s)})|,
\label{eq:entrywise-avg}
\end{equation}
and the matrix (Frobenius–norm) bound
\begin{equation}
\|\widetilde{\bar A}-\bar A\|_{\mathrm{F}}
\;\le\;\frac{\alpha}{MS}\sum_{m=1}^{M}\sum_{s=1}^{S} \|\Xi^{(m,s)}\|_{\mathrm{F}},
\qquad
\Xi^{(m,s)}:=\big[\xi_{ij}(t_m;T^{(s)})\big]_{i\ne j},
\label{eq:frob-avg}
\end{equation}
hence $\|\widetilde{\bar A}-\bar A\|_{\mathrm{F}}\le \alpha\,\overline{\|\Xi\|_{\mathrm{F}}}$ with
$\overline{\|\Xi\|_{\mathrm{F}}}$ the average Frobenius norm of lag–noise matrices.
\end{theorem}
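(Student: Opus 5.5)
The plan is to establish the scalar Lipschitz bound \eqref{eq:phi-lip} first. Every later inequality then follows from it by pointwise application, the triangle inequality, and Minkowski's inequality.

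\emph{Step 1: the Lipschitz bound.} Write $\phi_\alpha=\psi\circ|\cdot|$ with $\psi(u)=e^{-\alpha u}$ on $u\ge0$. For $u,v\ge0$, the mean value theorem gives
\[
|\psi(u)-\psi(v)|=\alpha e^{-\alpha w}|u-v|\le\alpha|u-v|
\]
for some $w$ between $u$ and $v$, because $w\ge0$ forces $e^{-\alpha w}\le1$. Combined with the reverse triangle inequality $\big||x|-|y|\big|\le|x-y|$, this yields \eqref{eq:phi-lip}. This is the only point where the specific form of the edge map enters. The delicate part is the kink of $|x|$ at $0$: composing with $|\cdot|$ avoids differentiating $\phi_\alpha$ there. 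It also explains why we restrict to $u\ge0$, since on that half-line $|\psi'|\le\alpha$ holds globally.

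\emph{Step 2: the edge bound.} Apply \eqref{eq:phi-lip} with $x=\widetilde{\Delta t}_{ij}(t;T)$ and $y=\Delta t_{ij}(t;T)$. Their difference is exactly $\xi_{ij}(t;T)$, which gives \eqref{eq:edge-lip-detailed} for every $(i,j,t,T)$.

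\emph{Step 3: the entry-wise average.} Since both $\bar A_{ij}$ and $\widetilde{\bar A}_{ij}$ are the same uniform average over $(m,s)$, their difference is the average of the edge differences. The triangle inequality for finite sums, followed by \eqref{eq:edge-lip-detailed} term by term, gives \eqref{eq:entrywise-avg}.

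\emph{Step 4: the Frobenius bound.} Write
\[
\widetilde{\bar A}-\bar A=\frac{1}{MS}\sum_{m,s}D^{(m,s)}, \qquad D^{(m,s)}_{ij}=\widetilde e_{ij}(t_m;T^{(s)})-e_{ij}(t_m;T^{(s)})
\]
for $i\ne j$. The diagonal entries are zero, or are simply not defined and therefore excluded, consistent with $\Xi^{(m,s)}$ being indexed by $i\ne j$. By the triangle inequality for the Frobenius norm (Minkowski),
\[
\|\widetilde{\bar A}-\bar A\|_{\mathrm F}\le\frac{1}{MS}\sum_{m,s}\|D^{(m,s)}\|_{\mathrm F}.
\]
Entry-wise domination $|D^{(m,s)}_{ij}|\le\alpha|\Xi^{(m,s)}_{ij}|$ from Step 2 and monotonicity of the Frobenius norm under entry-wise absolute domination give $\|D^{(m,s)}\|_{\mathrm F}\le\alpha\|\Xi^{(m,s)}\|_{\mathrm F}$, which establishes \eqref{eq:frob-avg}. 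The final ``hence'' clause is just a restatement, with $\overline{\|\Xi\|_{\mathrm F}}$ defined as the average in \eqref{eq:frob-avg}.

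Lemma~\ref{lem:shift} is not needed for this statement; it pertains to the IVP arguments.
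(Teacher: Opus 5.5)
Your proposal is correct and follows essentially the same route as the paper's proof. Both factor $\phi_\alpha$ as $u\mapsto e^{-\alpha u}$ on $u\ge0$ composed with $|\cdot|$ to get the $\alpha$-Lipschitz bound, apply it pointwise with $y=x+\xi$, average using the triangle inequality, and obtain the Frobenius bound from the triangle inequality for $\|\cdot\|_{\mathrm F}$ together with entry-wise domination over $i\ne j$.
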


\begin{corollary}[Deterministic and Gaussian perturbation bounds]\label{cor:erg-stability-detailed}
(i) (\emph{Uniformly bounded noise}). If $|\xi_{ij}(t;T)|\le \varepsilon_\infty$ almost surely, then
\begin{equation}
\big|\widetilde{\bar A}_{ij}-\bar A_{ij}\big|\le \alpha\,\varepsilon_\infty,
\qquad
\|\widetilde{\bar A}-\bar A\|_{\mathrm{F}}\le \alpha\,\varepsilon_\infty\,\sqrt{C(C-1)} .
\label{eq:det-bounds}
\end{equation}
(ii) (\emph{Gaussian timing noise}). Suppose $\{\xi_{ij}(t_m;T^{(s)})\}_{m,s}$ are independent $\mathcal{N}(0,\sigma^2)$ variables, conditional on the sampled event paths. Let $\Delta_{ij}:=\widetilde{\bar A}_{ij}-\bar A_{ij}$. Then
\begin{equation}
\mathbb{P}\!\left(|\Delta_{ij}-\mathbb{E}\Delta_{ij}|\ge \tau\right)\le
2\exp\!\Big(-\frac{MS\,\tau^2}{2\alpha^2\sigma^2}\Big),
\label{eq:prob-tail}
\end{equation}
and the bias and expected matrix perturbation obey
\begin{equation}
|\mathbb{E}\Delta_{ij}|\le \mathbb{E}\big[|\Delta_{ij}|\big]
\le \alpha\,\sigma\sqrt{\frac{2}{\pi}},
\qquad
\mathbb{E}\big[\|\widetilde{\bar A}-\bar A\|_{\mathrm{F}}\big]\le
\alpha\,\sigma\,\sqrt{C(C-1)} .
\label{eq:gauss-exp}
\end{equation}
\end{corollary}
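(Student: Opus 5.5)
The corollary follows from Theorem~\ref{thm:erg-stability-detailed} applied entry by entry, together with standard facts about Gaussian variables. Throughout, we take the diagonal entries of $\bar A$ and $\widetilde{\bar A}$ to be excluded (or identically zero), so that only the $C(C-1)$ off-diagonal entries $i\neq j$ contribute.

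\emph{Part (i).} Insert $|\xi_{ij}(t_m;T^{(s)})|\le\varepsilon_\infty$ into \eqref{eq:entrywise-avg}. This gives $\overline{|\xi_{ij}|}\le\varepsilon_\infty$, hence $|\widetilde{\bar A}_{ij}-\bar A_{ij}|\le\alpha\varepsilon_\infty$. Squaring this entry-wise bound and summing over the $C(C-1)$ off-diagonal pairs gives the Frobenius bound $\alpha\varepsilon_\infty\sqrt{C(C-1)}$.

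\emph{Part (ii), tail bound.} Fix $(i,j)$ and condition on the sampled event paths, so the noise-free lags are constants. Write $\Delta_{ij}=F(\xi)$, where $\xi\in\mathbb{R}^{MS}$ collects the variables $\xi_{ij}(t_m;T^{(s)})$ and
\[
F(\xi)=\frac{1}{MS}\sum_{m,s}\big[\phi_\alpha(\Delta t_{ij}+\xi_{m,s})-\phi_\alpha(\Delta t_{ij})\big].
\]
By \eqref{eq:phi-lip} and Cauchy--Schwarz,
\[
|F(\xi)-F(\xi')|\le\frac{\alpha}{MS}\sum_{m,s}|\xi_{m,s}-\xi'_{m,s}|\le\frac{\alpha}{\sqrt{MS}}\,\|\xi-\xi'\|_2 .
\]
So $F$ is $L$-Lipschitz with $L=\alpha/\sqrt{MS}$. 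Since $\xi$ has i.i.d.\ $\mathcal N(0,\sigma^2)$ coordinates, the Gaussian concentration inequality for Lipschitz functions (Tsirelson--Ibragimov--Sudakov) gives
\[
\mathbb P\big(|F-\mathbb E F|\ge\tau\big)\le 2\exp\!\big(-\tau^2/(2L^2\sigma^2)\big),
\]
which is exactly \eqref{eq:prob-tail}. A fallback, if one wants to avoid invoking this inequality, is to note that each summand is sub-Gaussian with variance proxy $\alpha^2\sigma^2$ (a Lipschitz image of a Gaussian) and apply a Hoeffding-type bound for independent sub-Gaussian summands. That route matches the stated constant only up to absolute factors, so the Lipschitz-concentration route is the one I would use.

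\emph{Part (ii), moment bounds.} The first inequality, $|\mathbb E\Delta_{ij}|\le\mathbb E|\Delta_{ij}|$, is Jensen's inequality. By \eqref{eq:entrywise-avg}, $\mathbb E|\Delta_{ij}|\le\alpha\,\mathbb E\,\overline{|\xi_{ij}|}=\alpha\,\mathbb E|\xi|=\alpha\sigma\sqrt{2/\pi}$, using the half-normal mean. For the matrix bound, apply Jensen twice:
\[
\mathbb E\|\widetilde{\bar A}-\bar A\|_F\le\Big(\sum_{i\ne j}\mathbb E\Delta_{ij}^2\Big)^{1/2},
\qquad
\mathbb E\Delta_{ij}^2\le\alpha^2\,\mathbb E\big(\overline{|\xi_{ij}|}\big)^2\le\alpha^2\,\overline{\mathbb E\xi^2}=\alpha^2\sigma^2 .
\]
Summing over the $C(C-1)$ off-diagonal entries gives $\alpha\sigma\sqrt{C(C-1)}$. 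This step needs no independence across different pairs $(i,j)$, only the marginal law of each $\xi_{ij}$.

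\emph{Main obstacle.} The delicate step is the tail bound. One has to recognise that $\Delta_{ij}$ is not a sum of bounded variables, so Hoeffding's inequality does not apply directly, and that averaging improves the Lipschitz constant by the factor $1/\sqrt{MS}$. That factor is what produces the $MS$ in the exponent of \eqref{eq:prob-tail}.
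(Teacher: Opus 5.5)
Your proof is correct and follows the paper's argument step for step: the entrywise average bound for (i), the $\alpha/\sqrt{MS}$-Lipschitz property of the averaged edge perturbation with Gaussian Lipschitz concentration for the tail, the half-normal mean for the bias, and Jensen for the expected Frobenius norm. (Getting the Frobenius bound in (i) by summing squared entrywise bounds, instead of through the matrix triangle-inequality bound of Theorem~\ref{thm:erg-stability-detailed}, is an equivalent shortcut.) One correction to your closing commentary: each summand is bounded because $\phi_\alpha$ takes values in $[0,1]$, so Hoeffding does apply, but it only gives $2\exp(-2MS\tau^2)$ with no dependence on $\alpha\sigma$, and that missing $\alpha\sigma$ scaling is the real reason the Gaussian concentration route is needed.
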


\noindent\textbf{Implication.}
Small perturbations in EPDE/MELP lags translate linearly (in $\alpha$) to entry--wise ERG changes. Under Gaussian timing noise, averaged edge perturbations concentrate around a bounded bias at the usual $1/\sqrt{MS}$ scale, with explicit constants controlled by the edge-map slope and noise magnitude.

\section{Proof of Theorem~\ref{thm:ivp-kl-dlif}}
\label{sec:proof-ivp-kl}
\begin{proof}
First, the normalizing constant in Eq.~(\ref{eq:dlif-density-normalized}) satisfies
\[
Z_S=\int_0^S r(v)e^{-R(v)}\,dv
=\int_0^S -\frac{d}{dv}e^{-R(v)}\,dv
=1-e^{-R(S)}.
\]
If $S=\infty$, the assumption $r(v)\ge a>0$ implies $R(v)\to\infty$ and hence $Z_\infty=1$.
Thus $p_r^S$ is a valid first-event density on the observation horizon.

Let $h(t)=q(t)\log(q(t)/p_r^S(t))$. The change of variables $m=-e^{-t}$ gives $M(m)=-\log(-m)$ and $dt=-(1/m)\,dm$. Since $t=0$ maps to $m=-1$ and $t=S$ maps to $m=-e^{-S}$, for any finite $S$,
\[
\int_0^S h(t)\,dt
=\int_{-1}^{-e^{-S}} h(M(m))\Big(-\frac{1}{m}\Big)\,dm
=\int_{-1}^{-e^{-S}} g(m)\,dm
=G(-e^{-S}),
\]
which proves Eq.~(\ref{eq:thm-limit}).

For $S=\infty$, the same finite-interval identity with $T_\varepsilon=-\log\varepsilon$ gives
\[
G(-\varepsilon)=\int_0^{T_\varepsilon}h(t)\,dt .
\]
Therefore,
\[
\left|\mathrm{KL}(q\|p_r^\infty)-G(-\varepsilon)\right|
=\left|\int_{T_\varepsilon}^{\infty}h(t)\,dt\right|
\le \int_{T_\varepsilon}^{\infty}|h(t)|\,dt,
\]
and the right-hand side converges to zero by integrability of the KL integrand. This proves Eq.~(\ref{eq:thm-tail}).
\end{proof}

\section{Proof of Theorem~\ref{thm:erg-stability-detailed}}
\label{sec:proof-erg-stability}
\begin{proof}
\textit{(Lipschitzness).} The absolute value is $1$–Lipschitz: $||x|-|y||\le |x-y|$.
The function $u\mapsto e^{-\alpha u}$ on $u\ge 0$ has derivative $|-\alpha e^{-\alpha u}|\le \alpha$, hence it is $\alpha$–Lipschitz on $\mathbb{R}_{\ge 0}$. By composition of Lipschitz maps,
\[
|\phi_\alpha(x)-\phi_\alpha(y)|
= \big|e^{-\alpha|x|}-e^{-\alpha|y|}\big|
\le \alpha\,\big||x|-|y|\big|
\le \alpha\,|x-y|,
\]
establishing Eq. (\ref{eq:phi-lip}). Taking $y=x+\xi$ gives Eq. (\ref{eq:edge-lip-detailed}).

\textit{(Averaging).} Using linearity of the average and triangle inequality,
\[
\big|\widetilde{\bar A}_{ij}-\bar A_{ij}\big|
= \Big|\frac{1}{MS}\sum_{m,s}\big(\widetilde e_{ij}(t_m;T^{(s)})-e_{ij}(t_m;T^{(s)})\big)\Big|
\le \frac{1}{MS}\sum_{m,s} \big|\widetilde e_{ij}-e_{ij}\big|
\le \frac{\alpha}{MS}\sum_{m,s} |\xi_{ij}(t_m;T^{(s)})|,
\]
which is Eq. (\ref{eq:entrywise-avg}).

\textit{(Matrix bound).} Define $\Delta^{(m,s)}:=[\widetilde e_{ij}(t_m;T^{(s)})-e_{ij}(t_m;T^{(s)})]_{i\ne j}$, so
$\widetilde{\bar A}-\bar A=\frac{1}{MS}\sum_{m,s}\Delta^{(m,s)}$. By the triangle inequality of the Frobenius norm,
\[
\|\widetilde{\bar A}-\bar A\|_{\mathrm{F}}
\le \frac{1}{MS}\sum_{m,s}\|\Delta^{(m,s)}\|_{\mathrm{F}}.
\]
Entry–wise inequality Eq. {\ref{eq:edge-lip-detailed}} implies $\|\Delta^{(m,s)}\|_{\mathrm{F}}
\le \alpha\,\|\Xi^{(m,s)}\|_{\mathrm{F}}$, giving Eq. (\ref{eq:frob-avg}).
\end{proof}

\section{Proof of Corollary~\ref{cor:erg-stability-detailed}}
\label{sec:proof-erg-cor}

\begin{proof}
(i) From Eq.~(\ref{eq:entrywise-avg}), $\overline{|\xi_{ij}|}\le \varepsilon_\infty$, giving the entry--wise claim. For the matrix bound, $\|\Xi^{(m,s)}\|_{\mathrm{F}}\le \varepsilon_\infty\sqrt{C(C-1)}$ for every $(m,s)$; applying Eq.~(\ref{eq:frob-avg}) gives Eq.~(\ref{eq:det-bounds}).

(ii) Fix $(i,j)$ and condition on the unperturbed event lags. Let $N=MS$ and write
\[
F(\xi_1,\ldots,\xi_N)=\frac{1}{N}\sum_{n=1}^{N}\Big[\phi_\alpha(x_n+\xi_n)-\phi_\alpha(x_n)\Big],
\]
where $\{x_n\}$ denote the corresponding noise-free lags. By Theorem~\ref{thm:erg-stability-detailed}, $F$ is $\alpha/\sqrt{N}$-Lipschitz with respect to the Euclidean norm. Gaussian concentration for Lipschitz functions gives
\[
\mathbb{P}\big(|F-\mathbb{E}F|\ge \tau\big)
\le 2\exp\!\left(-\frac{N\tau^2}{2\alpha^2\sigma^2}\right),
\]
which is Eq.~(\ref{eq:prob-tail}) since $F=\Delta_{ij}$.

For the bias, Eq.~(\ref{eq:edge-lip-detailed}) gives
\[
|\Delta_{ij}|\le \frac{\alpha}{N}\sum_{n=1}^{N}|\xi_n|.
\]
Taking expectations and using $\mathbb{E}|\xi|=\sigma\sqrt{2/\pi}$ for $\xi\sim\mathcal{N}(0,\sigma^2)$ yields the entry--wise bounds in Eq.~(\ref{eq:gauss-exp}). For the Frobenius term, Jensen's inequality gives $\mathbb{E}\|\widetilde{\bar A}-\bar A\|_{\mathrm{F}}\le (\sum_{i\ne j}\mathbb{E}\Delta_{ij}^2)^{1/2}$, and Eq.~(\ref{eq:edge-lip-detailed}) implies $\mathbb{E}\Delta_{ij}^2\le \alpha^2\sigma^2$, yielding the stated matrix bound.
\end{proof}

\section{Additional Experimental Details}
\label{sec:toyexp}

\subsection{Alzheimer's Disease EEG Baseline Methods}

To rigorously evaluate our proposed method, we benchmarked it against several representative deep learning approaches commonly utilized for EEG analysis. These baselines include convolutional, recurrent, attention-based, and transformer-based models, each demonstrating distinct strengths for capturing various aspects of EEG signal patterns.

\textbf{EEGNet} \citep{lawhern2018eegnet} is a compact convolutional neural network initially developed for EEG-based brain–computer interfaces. It integrates depthwise and separable convolutions to effectively capture temporal, spatial, and frequency-specific characteristics in EEG data, making it a well-recognized lightweight yet powerful model in EEG classification tasks.

\textbf{LCADNet} \citep{kachare2024lcadnet} is specifically tailored for Alzheimer's disease detection from EEG data. Utilizing optimized convolutional structures designed for computational efficiency without sacrificing discriminative power, LCADNet achieves competitive performance in resource-limited environments, making it a strong baseline for EEG-based AD diagnosis.

\textbf{LSTM} \citep{zhang2021deep} embodies recurrent neural networks tailored for modeling temporal dependencies inherent in EEG signals. By maintaining and updating hidden states across sequences, LSTMs effectively capture long-term dynamics and temporal correlations, making them naturally suitable for sequential EEG analyses.

\textbf{ATCNet} \citep{altaheri2022physics} employs a physics-informed architecture combining temporal convolutions with attention mechanisms. Originally proposed for motor imagery EEG classification, it effectively captures both local temporal details and global dependencies, showcasing adaptability across various EEG applications.

\textbf{ADformer} \citep{wang2024adformer} is a multi-granularity transformer specifically crafted for Alzheimer's disease evaluation using EEG signals. It utilizes multi-scale attention mechanisms to concurrently model fine-grained and coarse-grained temporal information, setting a high-performance standard in EEG-based AD diagnostics.

\textbf{LEAD} \citep{wang2025lead} exemplifies the recent advancement toward large-scale foundation models in EEG analysis. Pre-trained extensively on vast EEG datasets and fine-tuned for Alzheimer's disease detection, LEAD leverages transfer learning to provide robust, generalizable EEG representations, establishing a new benchmark in EEG-based clinical assessments.

\subsection{Dataset Descriptions}

\subsubsection{Toy Dataset and Data Generation}

\paragraph{Frequency bands and sampling.}
To systematically evaluate our model’s ability to capture latent event dynamics, we constructed synthetic datasets with clearly defined frequency bands. We generated latent event rates $\lambda$ from truncated normal distributions centered at the midpoint of each target frequency band: low band [5–10 Hz] with $\lambda \sim \mathrm{TruncNormal}(\mu=7.5, \sigma=1.0; [5,10])$, middle band [10–15 Hz] with $\lambda \sim \mathrm{TruncNormal}(\mu=12.5, \sigma=1.0; [10,15])$, and high band [15–20 Hz] with $\lambda \sim \mathrm{TruncNormal}(\mu=17.5, \sigma=1.0; [15,20])$. This design ensures concentrated event-rate distributions within each band while avoiding frequencies outside the desired range.

\paragraph{Data scale and splitting strategy.}
For each frequency band, we independently generated three data splits: a training set with 150 distinct event rates, each having 50 sequences (7,500 sequences total); a validation set with 25 new event rates and 50 sequences per rate (1,250 sequences total); and a test set with an additional 25 new event rates, again with 50 sequences per rate (1,250 sequences total). Importantly, no overlap of event rates occurs across training, validation, and test splits to ensure proper evaluation of model generalization.

\paragraph{Sequence generation.}
Each synthetic sequence comprises 20 observations, constructed by sampling inter-event times $\Delta t_i$ from an exponential distribution with parameter $\lambda$. The event timestamps $t_i$ are obtained cumulatively by $t_i = \sum_{j=1}^{i} \Delta t_j$. Observations $y_i$ are subsequently generated using the relationship:

$$
y_i = \sin(t_i) + \eta_i,\quad \eta_i \sim \mathcal{N}(0, \sigma_\eta^2),
$$

where the default noise level is $\sigma_\eta = 0.07$. Additional sensitivity analyses varied $\sigma_\eta$ within $\{0.05, 0.10, 0.15\}$ to assess model robustness.

\paragraph{Evaluation methodology.}
Model performance was comprehensively evaluated using three criteria: (1) classification score (CS) assessing sequence-level predictive accuracy, (2) uncertainty calibration, quantified through the median and 95\% confidence interval of the estimated event rate $\hat{\lambda}$, obtained via nonparametric resampling within each frequency band, and (3) structural fidelity, measured using intersection-over-union (IoU) between the predicted latent structure and the ground-truth event patterns.

\subsubsection{Alzheimer's Disease EEG Dataset}

\textbf{Dataset AD Cohort A} \citep{ds004504:1.0.7} consists of resting-state, eyes-closed EEG recordings from a total of 88 participants, categorized into 36 individuals diagnosed with Alzheimer's disease (AD), 23 patients with frontotemporal dementia (FTD), and 29 healthy control subjects (HC). The EEG data were collected using 19 electrodes arranged according to the international 10–20 placement system. The recordings have a sampling rate of 500 Hz and an average duration ranging from approximately 12 to 14 minutes per subject. Provided in adherence to the Brain Imaging Data Structure (BIDS) standard, the dataset includes both raw and preprocessed EEG signals, enabling robust comparative analysis across different dementia subtypes.

\textbf{Dataset AD Cohort B} \citep{sadegh2023approach} includes resting-state EEG data from 168 participants, segmented into 59 moderate Alzheimer's disease patients (AD), 7 individuals diagnosed with mild cognitive impairment (MCI), and 102 healthy controls (HC). EEG recordings were acquired using the standardized 10–20 electrode placement system, with data presented in MATLAB (.mat) format. Accompanying the EEG data are Mini-Mental State Examination (MMSE) scores, providing cognitive assessments for participants. This dataset is particularly tailored for the distinction of AD from MCI, thus serving as a valuable resource for investigations aimed at early Alzheimer's disease diagnosis.

\subsection{Computational cost}
To contextualize runtime/complexity, we report a per-window estimate of floating point operations (FLOPs) and parameter counts for a single 2-s EEG window (19 channels, 500 Hz). While LERD is heavier than lightweight CNN backbones (e.g., EEGNet), it remains substantially smaller than transformer-based baselines.

\begin{table}[H]
\centering
\caption{Approximate computational cost for one 2-s EEG window (19 channels, 500 Hz).}
\label{tab:compute_cost}
\begin{tabular}{lcc}
\toprule
\textbf{Model} & \textbf{FLOPs} & \textbf{Params} \\
\midrule
\textbf{LERD} & 60.176M & 3.103M \\
EEGNet & 9.989M & 1.232K \\
LCADNet & 7.783M & 151.302K \\
LSTM & 114.82M & 5.5K \\
ATCNet & 328.74M & 4.3K \\
ADFormer & 3.636G & 5.34M \\
LEAD & 0.766G & 3.33M \\
\bottomrule
\end{tabular}
\end{table}

\subsection{Additional evaluation metrics}
For completeness, we report sensitivity, specificity, and AUC (mean $\pm$ s.d. across the five cross-subject folds) for both cohorts.

\begin{table}[H]
\centering
\caption{Sensitivity, specificity, and AUC on Dataset AD cohort A.}
\label{tab:cohort1_sens_spec_auc}
\begin{adjustbox}{max width=\linewidth}
\begin{tabular}{lccc}
\toprule
\textbf{Model} & \textbf{Sensitivity} & \textbf{Specificity} & \textbf{AUC} \\
\midrule
EEGNet & $0.6830 \pm 0.1050$ & $0.8463 \pm 0.0355$ & $0.7646 \pm 0.0695$ \\
ADFormer & $0.7017 \pm 0.0833$ & $0.8495 \pm 0.0366$ & $0.7756 \pm 0.0597$ \\
LSTM & $0.6756 \pm 0.0692$ & $0.8426 \pm 0.0382$ & $0.7591 \pm 0.0529$ \\
ATCNet & $0.6844 \pm 0.0627$ & $0.8350 \pm 0.0334$ & $0.7597 \pm 0.0460$ \\
LEAD & $0.7067 \pm 0.0498$ & $0.8594 \pm 0.0216$ & $0.7830 \pm 0.0343$ \\
LCADNet & $0.7006 \pm 0.0491$ & $0.8445 \pm 0.0358$ & $\mathbf{0.8462} \pm \mathbf{0.0495}$ \\
\textbf{LERD} & $\mathbf{0.7493} \pm \mathbf{0.0841}$ & $\mathbf{0.8793} \pm \mathbf{0.0377}$ & $0.8143 \pm 0.0603$ \\
\bottomrule
\end{tabular}
\end{adjustbox}
\end{table}

\begin{table}[H]
\centering
\caption{Sensitivity, specificity, and AUC on Dataset AD cohort B.}
\label{tab:cohort2_sens_spec_auc}
\begin{adjustbox}{max width=\linewidth}
\begin{tabular}{lccc}
\toprule
\textbf{Model} & \textbf{Sensitivity} & \textbf{Specificity} & \textbf{AUC} \\
\midrule
EEGNet & $0.5544 \pm 0.0801$ & $0.8717 \pm 0.0921$ & $0.9494 \pm 0.0445$ \\
ADFormer & $0.6249 \pm 0.0430$ & $0.9546 \pm 0.0306$ & $\mathbf{0.9714} \pm \mathbf{0.0381}$ \\
LSTM & $0.5167 \pm 0.0867$ & $0.9038 \pm 0.0387$ & $0.9486 \pm 0.0314$ \\
ATCNet & $0.6412 \pm 0.0313$ & $\mathbf{0.9628} \pm \mathbf{0.0082}$ & $0.9650 \pm 0.0366$ \\
LEAD & $0.5827 \pm 0.1291$ & $0.9224 \pm 0.0484$ & $0.9568 \pm 0.0311$ \\
LCADNet & $0.6427 \pm 0.1045$ & $0.9500 \pm 0.0257$ & $0.9657 \pm 0.0344$ \\
\textbf{LERD} & $\mathbf{0.6891} \pm \mathbf{0.1182}$ & $0.9500 \pm 0.0388$ & $0.8383 \pm 0.0578$ \\
\bottomrule
\end{tabular}
\end{adjustbox}
\end{table}

\subsection{Auxiliary-weight sensitivity}
\label{sec:sensitivity}
To check whether LERD depends on a knife-edge auxiliary-weight choice, we performed one-factor sensitivity sweeps on Cohort~A while keeping the remaining training protocol fixed. Performance is strongest in a moderate range for each regularizer and degrades mainly when weights become poorly scaled.

\begin{table}[H]
\centering
\small
\caption{One-factor sensitivity sweeps for auxiliary regularization weights on AD cohort A. Accuracy is reported in percentage.}
\label{tab:aux_sensitivity}
\begin{adjustbox}{max width=\linewidth}
\begin{tabular}{cc@{\qquad}cc@{\qquad}cc}
\toprule
dLIF weight & Acc. & ERG weight & Acc. & IVP-KL weight & Acc. \\
\midrule
$1$       & 71.70 & $1$       & 70.39 & $0$          & 71.63 \\
$10^{-1}$ & 75.03 & $10^{-1}$ & 70.52 & $10^{-10}$   & 72.81 \\
$10^{-2}$ & 73.92 & $10^{-2}$ & 73.86 & $5\times10^{-10}$ & 75.03 \\
$10^{-3}$ & 69.35 & $10^{-3}$ & 66.99 & $2\times10^{-9}$  & 73.86 \\
$10^{-4}$ & 68.24 & $10^{-4}$ & 63.73 & $10^{-9}$    & 72.61 \\
$10^{-5}$ & 72.81 & $10^{-5}$ & 72.75 & $10^{-8}$    & 70.52 \\
$10^{-6}$ & 70.46 & $10^{-6}$ & 73.92 & $10^{-7}$    & 70.46 \\
$10^{-7}$ & 67.06 & $10^{-7}$ & 69.22 & $10^{-6}$    & 71.57 \\
$10^{-8}$ & 72.81 & $10^{-8}$ & 75.03 & $10^{-5}$    & 70.59 \\
$10^{-9}$ & 72.81 & $10^{-9}$ & 72.75 & $10^{-4}$    & 68.30 \\
$10^{-10}$& 71.63 & $10^{-10}$& 70.52 & $10^{-3}$    & 69.41 \\
$0$       & 73.92 & $0$       & 72.75 &              &       \\
\bottomrule
\end{tabular}
\end{adjustbox}
\end{table}

\subsection{Additional dLIF frequency channels}
\label{sec:additional_channels}
Table~\ref{tab:additional_channels} reports the non-plotted channels used to check whether the frequency-slowing trend in Figure~\ref{fig:frequency_visualization} is representative. Across these 19 channels, HC has the highest inferred central frequency in 18/19 and is higher than MCI in all 19.

\begin{table}[H]
\centering
\small
\caption{Additional inferred central frequencies (Hz) by group for all 19 channels.}
\label{tab:additional_channels}
\begin{adjustbox}{max width=\linewidth}
\begin{tabular}{lccc|lccc}
\toprule
\textbf{Channel} & \textbf{HC} & \textbf{MCI} & \textbf{AD} & \textbf{Channel} & \textbf{HC} & \textbf{MCI} & \textbf{AD} \\
\midrule
T5 & 9.8183 & 8.7469 & 9.2903 & T4 & 8.7062 & 8.1207 & 8.9866 \\
C4 & 9.5672 & 8.5863 & 9.3575 & P4 & 8.8150 & 8.3549 & 7.9066 \\
Fp1 & 9.6450 & 8.7443 & 9.0170 & F4 & 8.5122 & 7.5947 & 7.6261 \\
C3 & 9.4277 & 8.5671 & 9.2716 & T6 & 8.3738 & 7.5920 & 7.6467 \\
Fz & 9.5995 & 8.9193 & 8.6332 & O2 & 7.8049 & 7.1734 & 7.0697 \\
Fp2 & 9.3575 & 8.4117 & 9.1107 & O1 & 7.5260 & 6.9236 & 7.4002 \\
F8 & 9.3569 & 8.7205 & 8.4763 & F3 & 7.7258 & 7.0392 & 6.9440 \\
Cz & 9.2614 & 8.2537 & 8.7870 & F7 & 7.6354 & 6.9509 & 7.0236 \\
P3 & 7.2633 & 6.8134 & 6.7503 & T3 & 7.1219 & 6.7913 & 6.6266 \\
Pz & 7.0098 & 6.8035 & 6.6466 & & & & \\
\bottomrule
\end{tabular}
\end{adjustbox}
\end{table}

\section{Architectures and Training Details}
\label{sec:arch-train-brief}

This section gives a concise description of the components used in LERD and the training protocol. 

\subsection{Input, preprocessing, and windowing}
EEG is segmented into non–overlapping windows and $z$–scored channel–wise using statistics computed on the training split of each fold. In our AD experiments we use 2\,s windows from 500\,Hz recordings ($T{=}1000$) and $C{=}19$ electrodes (10–20 layout). Other datasets can adjust $C$ and $T$ without changing the architecture.

\subsection{Encoder}
The encoder follows the temporal–spatial factorization popular in EEGNet, with a mild max–norm constraint on spatial depthwise kernels for stability on EEG.

\begin{itemize}
\item \textbf{Block 1 (temporal $\rightarrow$ spatial).} Depthwise temporal convolution $\rightarrow$ BatchNorm $\rightarrow$ ELU; then depthwise \emph{spatial} convolution across electrodes $\rightarrow$ BatchNorm $\rightarrow$ ELU; time average pooling; dropout (0.1).
\item \textbf{Block 2 (depthwise–separable temporal).} Depthwise temporal convolution $\rightarrow$ BatchNorm $\rightarrow$ ELU; pointwise mixing $\rightarrow$ BatchNorm $\rightarrow$ ELU; time average pooling; dropout (0.1).
\item \textbf{Two branches.} (a) A flattened \emph{main} feature vector is used by the classifier; (b) a temporally downsampled, per–electrode feature map feeds the EPDE/MELP/dLIF block.
\end{itemize}

\subsection{EPDE + MELP + dLIF coupling (latent event dynamics)}
Given the encoder’s per–electrode temporal features, the EPDE produces a differentiable posterior over next–event times. We parameterize a small MLP per channel to output mixture parameters for the \textbf{MELP} (lognormal mixture; $K{=}3$ components). Sampling is reparameterized during training; at test time we use mixture expectations.
A compact hidden state is evolved with an explicit–Euler solver to obtain a denoised per–electrode trajectory used downstream. A rate proxy read from the EPDE is softly aligned with the \textbf{dLIF} prior via an $L_2$ rate consistency term with refractory gating and a plausible alpha/theta–to–beta frequency range.

\subsection{Event–Relational Graph (ERG) and GCN}
From posterior samples of event times, cross–channel lags are mapped through a smooth STDP–shaped nonlinearity to edge scores in $[0,1]$, then averaged over time/samples to produce a symmetric adjacency. A weak Fisher–$z$ alignment term biases edges toward observable correlations computed from the raw EEG without enforcing them. A single GCN layer converts per–channel temporal descriptors into compact node embeddings that are flattened for fusion.

\subsection{Classifier and fusion}
We concatenate the encoder’s main vector with the flattened GCN features and apply a two–layer MLP followed by a linear layer and Softmax over classes. No attention or recurrence is used at this stage; temporal information is already summarized by EPDE/MELP and the encoder.

\subsection{Optimization and protocol}
\begin{itemize}
\item \textbf{Loss.} Cross–entropy for labels plus small auxiliary terms: EPDE/MELP reconstruction/regularizers, dLIF rate consistency, ERG Fisher–$z$, and the IVP–KL surrogate. We use modest default weights and found them robust across cohorts.
\item \textbf{Training.} Adam (lr $5\times 10^{-4}$, weight decay $10^{-4}$), batch size 1024, gradient–norm clipping at 1.0, 30 epochs. Learning rate is halved if validation AUC does not improve for 15 epochs; the best AUC checkpoint is kept.
\item \textbf{Evaluation.} Five–fold \emph{cross–subject} splits; subject–level predictions from window probabilities via majority vote (or simple averaging).
\end{itemize}

\subsection{Shapes summary}
Below we list only input/output sizes and activations for clarity; $L{=}250$ denotes the temporal length after the first time–pooling stage.
\begin{table}[t]
\centering
\small
\begin{adjustbox}{max width=\linewidth}
\begin{tabular}{lccc}
\toprule
\textbf{Module} & \textbf{Input} & \textbf{Output} & \textbf{Activation} \\
\midrule
Input window & $\mathbb{R}^{B\times1\times19\times1000}$ & -- & -- \\
Encoder (temporal branch) & $\mathbb{R}^{B\times1\times19\times1000}$ & $\mathbb{R}^{B\times19\times L}$ ($L=250$) & ELU \\
Encoder (main branch, flattened) & $\mathbb{R}^{B\times1\times19\times1000}$ & $\mathbb{R}^{B\times1178}$ & ELU \\
EPDE/MELP/dLIF block & $\mathbb{R}^{B\times19\times L}$ & $\mathbb{R}^{B\times19\times L}$ & ELU (MLPs), Tanh (ODE) \\
ERG adjacency & lags from EPDE/MELP & $\mathbb{R}^{B\times19\times19}$ & exp kernel \\
GCN node embeddings & $\mathbb{R}^{B\times19\times L}$, adjacency & $\mathbb{R}^{B\times19\times64}$ & ReLU \\
Flattened graph features & $\mathbb{R}^{B\times19\times64}$ & $\mathbb{R}^{B\times1216}$ & -- \\
Fusion vector & concat(main, graph) & $\mathbb{R}^{B\times2394}$ & -- \\
Classifier logits & $\mathbb{R}^{B\times2394}$ & $\mathbb{R}^{B\times|\mathcal{Y}|}$ & ReLU (hidden), Softmax (out) \\
\bottomrule
\end{tabular}
\end{adjustbox}
\end{table}

\subsection{Pseudocode summaries}

\begin{algorithm}[H]
\caption{MELP Sampling (per electrode and time proxy)}
\begin{algorithmic}[1]
\Require Mixture weights \(w\in\Delta^{K-1}\), means \(\mu\in\mathbb{R}^K\), standard deviations \(\sigma\in\mathbb{R}_+^K\)
\Ensure Inter-event interval \(\tau\in\mathbb{R}_+\)
\State Sample component \(k \sim \mathrm{Categorical}(w)\)
\State Sample noise \(\epsilon \sim \mathcal{N}(0,1)\)
\State \(\tau \gets \exp\!\big(\mu_k + \sigma_k \cdot \epsilon\big)\)
\State \Return \(\tau\)
\end{algorithmic}
\end{algorithm}

\begin{algorithm}[H]
\caption{Neural ODE evolution over \([t_s, t_e]\) with \(S\) Euler sub-steps}
\begin{algorithmic}[1]
\Require Features \(\xi\), projection map \(\mathrm{proj}(\cdot)\), decoder \(\mathrm{decode}(\cdot)\), vector field \(f(\cdot)\), residual weight \(\alpha>0\), sub-steps \(S\in\mathbb{N}\), interval \([t_s,t_e]\)
\Ensure Decoded trajectory \(\hat{z}\in\mathbb{R}^{\lfloor T/4\rfloor}\)
\State \(y_0 \gets \mathrm{proj}(\xi)\)
\State \(\Delta t \gets (t_e - t_s)/S\)
\For{\(m=0,1,\dots,S-1\)}
 \State \(y_{m+1} \gets y_m + \Delta t \cdot \big(f(y_m) + \alpha\, y_m\big)\)
\EndFor
\State \(\hat{z} \gets \mathrm{decode}(y_S)\)
\State \Return \(\hat{z}\)
\end{algorithmic}
\end{algorithm}

\begin{algorithm}[H]
\caption{Graph weights from \(z_t\)}
\begin{algorithmic}[1]
\Require Per-electrode trajectories \(z_t \in \mathbb{R}^{N\times L}\) with \(L=\lfloor T/4\rfloor\); scale \(\gamma>0\); kernel mode \(\texttt{mode}\in\{\texttt{exp},\texttt{gauss},\texttt{inv1}\}\)
\Ensure Symmetric adjacency \(W\in\mathbb{R}^{N\times N}\) with zero diagonal
\State \(\forall c\in\{1,\dots,N\}: s_c \gets \frac{1}{L}\sum_{t=1}^{L} z_t(c,t)\)
\For{\(i=1,\dots,N\)}
 \For{\(j=1,\dots,N\)}
 \If{\(i=j\)} \State \(W_{ij}\gets 0\) \Else
 \State \(\Delta \gets |s_i - s_j|\)
 \State \(W_{ij}\gets \exp(-\gamma\,\Delta)\)
 \EndIf
 \EndFor
\EndFor
\State \(W \gets \tfrac{1}{2}\,(W + W^\top)\) \Comment{Enforce symmetry}
\State \Return \(W\)
\end{algorithmic}
\end{algorithm}

\begin{algorithm}[H]
\caption{LERD: Training and Inference}
\begin{algorithmic}[1]
\Require Dataset $\mathcal{D}=\{(X,Y)\}$, electrodes $C$, window length $T$, MELP comps $K$, ODE substeps $S$
\Require Loss weights $\lambda_{\mathrm{aux}},\lambda_{\mathrm{spk}},\lambda_{\mathrm{graph}},\lambda_{\mathrm{LIF}}$; LR $\eta$
\Ensure Trained params $\Theta$; predictor $\mathrm{LERD}(\cdot)$
\State Initialize encoder $\psi$, EPDE+MELP $\phi$, dLIF head $\xi$, ERG+GCN $\eta_g$, classifier $\theta$; Adam$(\eta)$
\For{epoch $=1..E$}
 \For{mini-batch $(X,Y)\!\sim\!\mathcal{D}$}
 \State \textbf{Preprocess}: channel-wise $z$-score per window
 \State \textbf{Encoder} ($\psi$): $\mathtt{main\_vec}\in\mathbb{R}^{B\times d_{\text{main}}}$, $\mathtt{temp\_feat}\in\mathbb{R}^{B\times C\times 1\times L}$, $L=\lfloor T/4\rfloor$
 \State \textbf{EPDE+MELP+ODE}:
 \Statex \quad For $c=1..C$: EPDE $\Rightarrow$ mixture $(\mathbf{w}_c,\boldsymbol{\mu}_c,\boldsymbol{\sigma}_c)$;
 sample $\tau=\exp(\mu_{c,k}+\sigma_{c,k}\varepsilon)$, accumulate events $\{T_c\}$;
 ODE evolve with $S$ Euler steps $\Rightarrow$ trajectory $z_{c,:}$
 \State Stack $Z\in\mathbb{R}^{B\times C\times L}$; keep event lags for ERG
 \State \textbf{dLIF prior}: compute $r_c(t)$ from $Z$ with refractory gate; rate proxy $\widehat r_c(t)$
 \State \textbf{Regularizers}: $\mathcal{R}_{\mathrm{LIF}}=\sum_c\!\int(\widehat r_c-r_c)^2$;\quad
 $\mathrm{KL}_T=\sum_c G_c(-e^{-S})$ (finite-horizon IVP--KL regularizer)
 \State \textbf{ERG} from event lags: $e_{ij}(t)=\exp(-\alpha|\Delta\tilde t_{ij}(t)|)$, average $\Rightarrow \bar A$
 \State \textbf{Fisher–$z$ alignment}: $\mathcal{R}_{\mathrm{ERG}}=\sum_{i<j}\!\big[(z^{\mathrm{obs}}_{ij}-z^{\mathrm{pred}}_{ij})^2/(2\sigma_{ij}^2)+\tfrac{1}{2}\log\sigma_{ij}^2\big]$
 \State \textbf{GCN} on $(\bar A,Z)$: $G\in\mathbb{R}^{B\times(C\cdot d_g)}$;\quad \textbf{Fuse} $H=[\mathtt{main\_vec};G]$
 \State \textbf{Classifier}: logits $\ell=\mathrm{MLP}(H)$, $p=\mathrm{softmax}(\ell)$
 \State \textbf{Loss}: $\mathcal{L}=\mathrm{CE}(p,Y)
 +\lambda_{\mathrm{aux}}\mathcal{L}_{\mathrm{STRODE}}
 +\lambda_{\mathrm{spk}}\mathcal{L}_{\mathrm{spk}}
 +\lambda_{\mathrm{graph}}\mathcal{R}_{\mathrm{ERG}}
 +\lambda_{\mathrm{LIF}}\mathcal{R}_{\mathrm{LIF}}
 +\mathrm{KL}_T$
 \State Backprop; clip $\|\nabla\|\!\le\!1$; Adam step on $\Theta$
 \EndFor
\EndFor
\Statex
\State \textbf{Inference on window $X$}:
\State Preprocess $\rightarrow$ Encoder; EPDE gives MELP expectations $\rightarrow$ events $\{T_c\}$; ODE $\rightarrow Z$;
build $\bar A$; GCN $\rightarrow G$; fuse $\rightarrow H$; output $p$ and $\hat y=\arg\max p$
\State \textbf{Subject aggregation}: majority vote $\Rightarrow \hat y_{\text{subj}}$
\end{algorithmic}
\end{algorithm}

\end{document}